\documentclass{article} % For LaTeX2e
\usepackage{iclr2027_conference,times}

\usepackage{amsmath,amsfonts,bm}

\def\eqref#1{equation~\ref{#1}}
\def\1{\bm{1}}

\DeclareMathAlphabet{\mathsfit}{\encodingdefault}{\sfdefault}{m}{sl}
\SetMathAlphabet{\mathsfit}{bold}{\encodingdefault}{\sfdefault}{bx}{n}

\usepackage{hyperref}       % hyperlinks
\usepackage{url}            % simple URL typesetting
\usepackage{booktabs}       % professional-quality tables
\usepackage{amsfonts}       % blackboard math symbols
\usepackage{nicefrac}       % compact symbols for 1/2, etc.
\usepackage{microtype}      % microtypography
\usepackage{xcolor}         % colors
\usepackage{algorithm}

\usepackage{graphicx}
\usepackage{subcaption}

\usepackage{mathtools}
\usepackage{paralist}
\usepackage{multicol, multirow}
\usepackage{caption}
\usepackage{makecell}
\usepackage{enumitem}
\usepackage{algorithmic}
\usepackage{float}
\usepackage{wrapfig}

\usepackage{amsmath}
\usepackage{amssymb}
\usepackage{amsthm}

\theoremstyle{plain}
\newtheorem{theorem}{Theorem}[section]

\newtheorem{proposition}[theorem]{Proposition}
 
\theoremstyle{definition}

\theoremstyle{remark}

\title{Beyond Simulation: Retain-and-Repair Neural Operators for Real-World Adaptation}
\iclrfinalcopy
\author{Woojin Cho \\
TelePIX \\
\texttt{woojin@telepix.net} \\
\And
Junghwan Park \\
TelePIX \\
\texttt{junghwan@telepix.net} \\
}

\begin{document}

\maketitle
\begin{abstract}
Neural operators increasingly benefit from pretraining on numerical simulations, yet adapting them for real-world prediction remains challenging. We introduce the Retain-and-Repair Neural Operator ($\text{R}^{2}\text{NO}$), a framework for adapting simulation-pretrained operators to real-world data while retaining useful pretrained structure. The pretrained operator is first finetuned on real data and then frozen to provide a source prediction, and a shared repair module learns a sequence of refinements from the same observations. Using orthogonal Fourier projections, a spectral ensemble fits a small ridge regression within each cell of the Fourier domain and combines the refinements by weights fitted on a held-out split of the real data. The cells are defined jointly by radial ranges, angular sectors, and measured channels, allowing refinement depth to vary with frequency magnitude, with orientation, and across channels. Including the source prediction as a candidate makes retention available in every cell, and independently trained repair modules enter the same combination as additional candidates. On all RealPDEBench systems and six backbones, $\text{R}^{2}\text{NO}$ consistently outperforms full finetuning and iterative refinement. The framework treats adaptation depth as a cell-specific choice learned from real data.
\end{abstract}

\section{Introduction}
\label{sec:introduction}

Neural operators learn mappings between function spaces and provide fast surrogates for the evolution of physical systems~\citep{li2020fourier,lu2021learning,raonic2023convolutional}. Much of their development has relied on numerical simulations for both training and evaluation, while recent pretraining approaches have learned reusable representations across diverse simulated systems~\citep{hao2024dpot,herde2024poseidon}. Even when simulations and real measurements nominally follow the same governing equations, modeling approximations, experimental conditions, and measurement processes can create a substantial sim-to-real gap. However, adapting neural operators to limited real measurements remains largely underexplored. This motivates a central question: how can we adapt simulation-trained operators to real measurements without discarding what they already get right?

Looped models provide a natural structure for addressing this question. Recent approaches in language and diffusion modeling repeatedly apply a weight-shared block to produce intermediate representations without increasing the number of parameters~\citep{giannou2023looped,saunshi2025reasoning, movahedi2026fixed}. In neural operators, IRNO~\citep{liu2026iterative} similarly applies iterative corrections to mitigate spectral bias. We use this iterative structure for a different purpose: determining which components of a simulation-pretrained prediction should be retained and which should be repaired using real observations.

We analyze this behavior in detail in Section~\ref{sec:motivation}, showing that different spectral regions along a fixed refinement trajectory favor different depths. Consequently, a single iterate trades accuracy in one region against accuracy in another. The Fourier domain is particularly suitable for resolving this tradeoff because disjoint spectral regions induce mutually orthogonal projections, allowing the squared prediction error to be decomposed exactly across regions.
Motivated by this observation, we introduce the Retain-and-Repair Neural Operator ($\text{R}^{2}\text{NO}$), a model-agnostic framework for adapting simulation-pretrained neural operators to real measurements. 
$\text{R}^{2}\text{NO}$ adapts in two phases: it first finetunes the pretrained operator on real observations, then freezes the result and applies a shared repair module trained on the same observations, producing a trajectory of candidate predictions.
A spectral ensemble then combines these candidates separately within Fourier cells defined by radial frequency, orientation, and channel. Because the original prediction remains an explicit candidate, each cell can retain transferable simulation-derived components while using the refinement trajectory to repair components that do not transfer. The cell-wise weights are fitted through small ridge-regression problems on real data, and independently trained repair trajectories can be incorporated as additional candidates. Since $\text{R}^{2}\text{NO}$ operates on predicted fields, it can be applied across neural operator backbones without modifying their internal architectures.

We make the following contributions.
\begin{itemize}
\item \textbf{Model-agnostic retain-and-repair adaptation.} We introduce a framework that adapts a simulation-pretrained neural operator using a weight-shared repair module trained on limited real observations, while preserving the source prediction as an explicit candidate.

\item \textbf{Region-wise spectral ensemble and analysis.} We fit region-specific combinations of the source prediction and its repairs over a radial--angular spectral partition. We characterize the advantage over using a common refinement depth and distinguish convergence of the repair trajectory from the accuracy of its finite iterates.

\item \textbf{Evaluation under real-world finetuning.} We evaluate $\text{R}^{2}\text{NO}$ on all five RealPDEBench systems~\citep{hu2026realpdebench} and six pretrained backbones, comparing it with full finetuning and iterative refinement under the same sim-to-real adaptation setting.
\end{itemize}

\section{Motivation: Frequency-Dependent Refinement}
\label{sec:motivation}

A shared refinement module produces several predictions for the same input, but how many times to apply it is normally decided once for the whole output field. Methods that adapt the depth do so per token~\citep{elbayad2019depth,bae2026mixture}, not across the frequency content of a field. We ask whether every spectral component of a real-world prediction favors the same depth.

\paragraph{Iterative residual refinement.}
The iterative refinement neural operator (IRNO) of~\citet{liu2026iterative} augments a pretrained neural operator with a weight-shared residual loop, so one trained model yields a prediction at every depth. The operator $\Psi(\cdot;\pi)$ stays frozen and maps a real input history $\mathcal{X}$ to an initial prediction $h_{0}=\Psi(\mathcal{X};\pi)$. A refinement module $\Phi(\cdot;\theta)$ receives the input history together with the current prediction and returns a residual of the same shape as the output field, which is added with a step size $\alpha>0$.
\begin{equation}
h_{\ell+1}=h_{\ell}+\alpha\,\Phi(\mathcal{X},h_{\ell};\theta),
\qquad \ell=0,\ldots,L-1.
\label{eq:irno-trajectory}
\end{equation}
Because $\Phi$ predicts a correction rather than the field itself, the pretrained prediction is the starting point of the loop and every iterate $h_{\ell}$ is a complete prediction, while running the loop longer adds computation but no parameters. IRNO trains $\Phi$ with an objective that supervises every iterate against the target, and at inference it reports the field at one depth chosen for the whole output.

\begin{figure}[t]
\centering
\subfloat[Real observation]{\includegraphics[width=0.325\columnwidth]{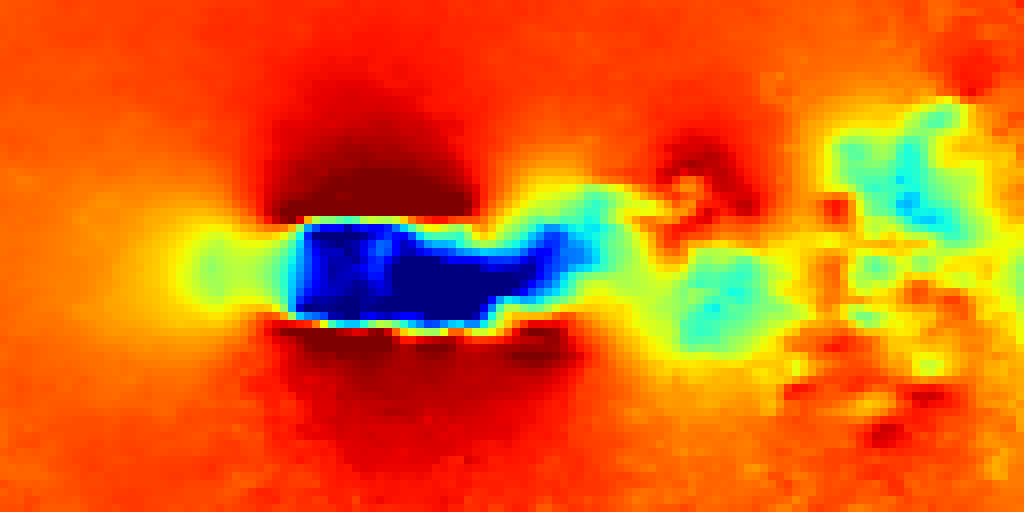}}\hfill
\subfloat[Radial Fourier regions\label{fig:dist_fourier-bands}]{\includegraphics[width=0.325\columnwidth]{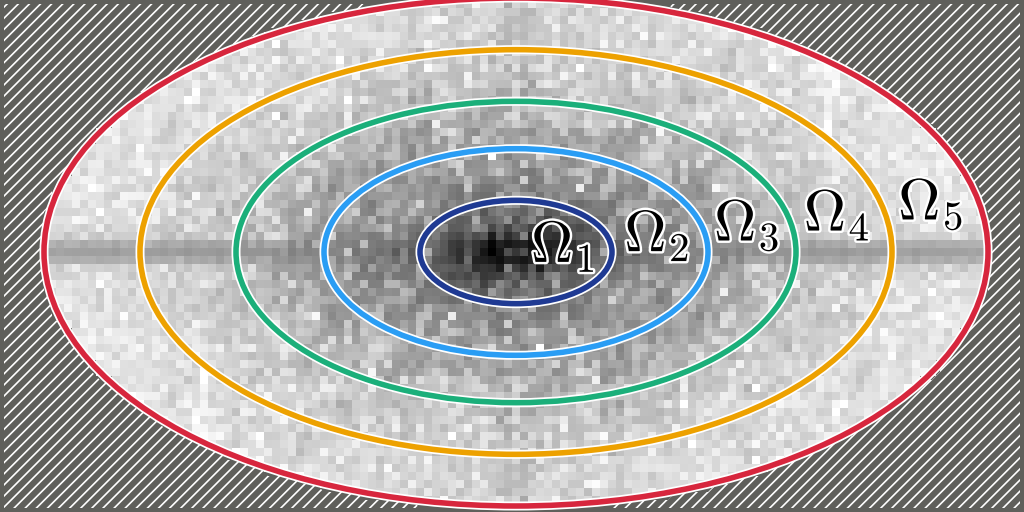}}\hfill
\subfloat[Refinement error by region]{\includegraphics[width=0.325\columnwidth]{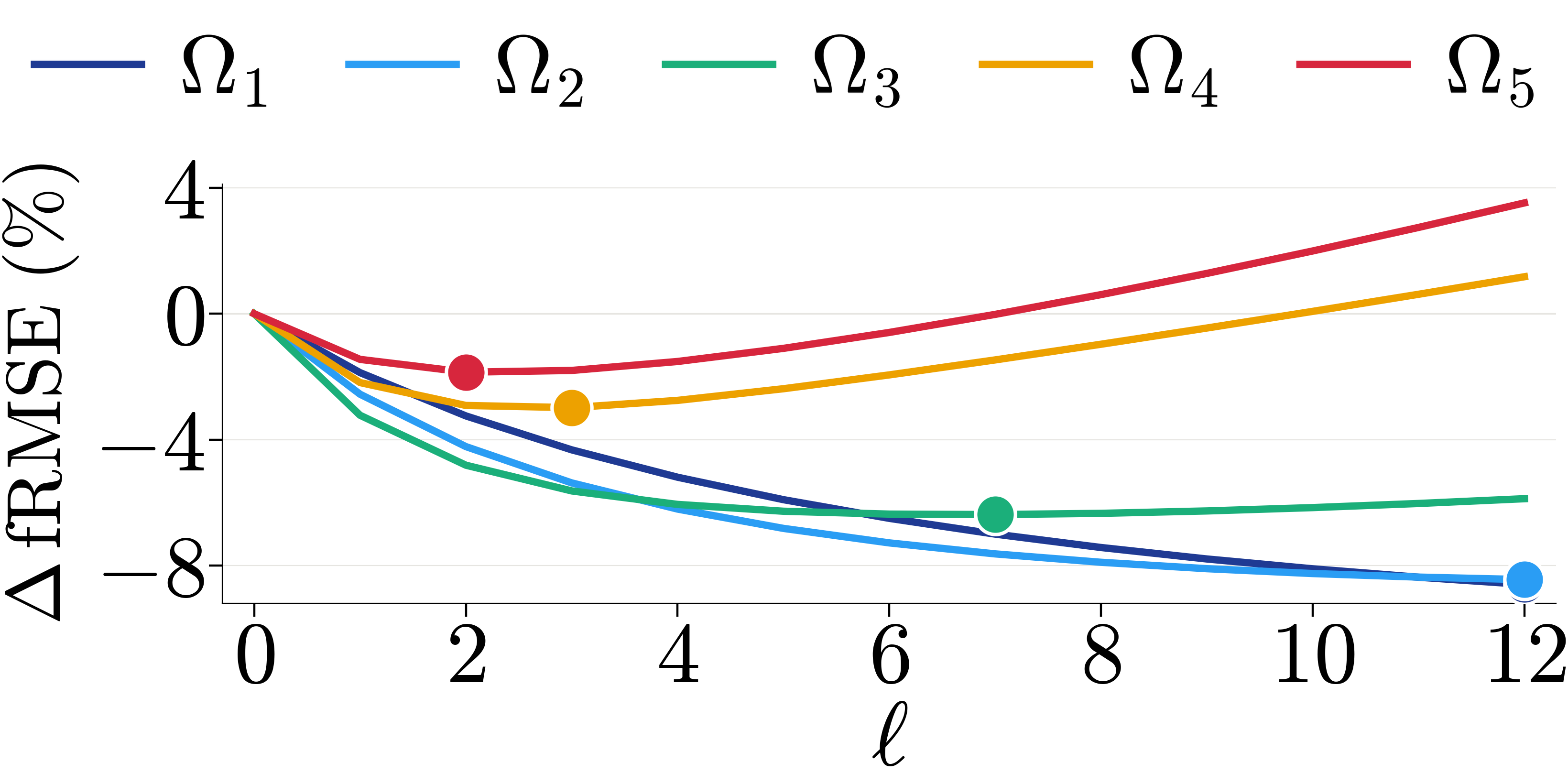}}
\caption{
\textbf{Different Fourier regions favor different refinement depths.}
(a) A real Cylinder observation in physical space.
(b) Five radial Fourier regions, ordered from low to high frequency.
(c) Region-wise fRMSE change (\%) relative to $\ell=0$ along an IRNO trajectory trained on FNO; markers indicate minima over the evaluated depths, and lower is better. Setup in Appendix~\ref{app:diagnostic-setup}.
}
\label{fig:dist_fourier}
\end{figure}

\paragraph{Refinement trained on real data.}
For a RealPDEBench system, we take the finetuned backbone as $\Psi$, train $\Phi$ on the real training trajectories with the IRNO objective, and then hold both $\pi$ and $\theta$ fixed. 
Rather than scoring the whole field at one depth, we look at how the error of each iterate is distributed over the Fourier domain.

\paragraph{Error by spectral region.}
\label{sec:spectral-setup}
Let $\mathcal{F}$ be the unitary Fourier transform over the spatial axes of the output, and let $\Omega_{1},\ldots,\Omega_{N_{\Omega}}$ be disjoint, conjugate-symmetric sets of wavenumbers. Region $b$ acts on a field by keeping only its Fourier coefficients in $\Omega_{b}$.
\begin{equation}
P_{b}v=\mathcal{F}^{-1}\!\left[\mathbf{1}_{\Omega_{b}}\odot\mathcal{F}v\right],
\qquad b=1,\ldots,N_{\Omega}.
\label{eq:projector}
\end{equation}
The projections in Eq.~\ref{eq:projector} are mutually orthogonal, since $\mathcal{F}$ is unitary and the sets $\Omega_{b}$ are disjoint, and conjugate symmetry keeps $P_{b}v$ real for real $v$. We run the loop of Eq.~\ref{eq:irno-trajectory} for $L=12$ steps on the real test split $\mathcal{D}^{\mathrm{test}}$ and compute the fRMSE~\citep{takamoto2022pdebench, hu2026realpdebench} of every iterate $h_{0},\ldots,h_{L}$ in the $N_{\Omega}=5$ radial regions of Figure~\ref{fig:dist_fourier}(b); Appendix~\ref{app:diagnostic-setup} details the setup. Figure~\ref{fig:dist_fourier}(c) plots the change relative to $\ell=0$; negative values mean improvement, and the markers indicate each region's best depth. On the real Cylinder data, this best depth decreases from the lowest band $\Omega_{1}$ to the highest $\Omega_{5}$: no single $\ell$ serves the whole spectrum. This is an observation about one trained trajectory rather than a rule tied to frequency; what it argues for is learning each region's depth from data instead of prescribing it.

\paragraph{The cost of a common depth.}
The disagreement can be quantified in the squared field error that our spectral ensemble later minimizes. Let $\widehat{\mathbb{E}}_{\mathrm{test}}$ denote the empirical average over the input--target pairs $(\mathcal{X},y)$ of $\mathcal{D}^{\mathrm{test}}$, let all norms be Euclidean norms of the vectorized field, and let $\mathcal{S}_{0}=\{0,\ldots,L\}$ be the set of candidate depths, which includes the source prediction at $\ell=0$. For every candidate we record its squared error restricted to region $b$.
\begin{equation}
\widehat{E}_{b}(\ell)=\widehat{\mathbb{E}}_{\mathrm{test}}\,\|P_{b}(h_{\ell}-y)\|_{2}^{2},
\qquad \ell\in\mathcal{S}_{0}.
\label{eq:diagnostic-region-error}
\end{equation}
When the regions form an exhaustive partition of the represented wavenumbers, every field is the sum of its regional components, and orthogonality splits its squared error exactly across regions.
\begin{equation}
\sum_{b=1}^{N_{\Omega}}P_{b}v=v,
\qquad
\|v-y\|_{2}^{2}=\sum_{b=1}^{N_{\Omega}}\|P_{b}(v-y)\|_{2}^{2}.
\label{eq:parseval}
\end{equation}
Averaged over $\mathcal{D}^{\mathrm{test}}$, the error of candidate $h_{\ell}$ is therefore $\sum_{b}\widehat{E}_{b}(\ell)$, and the price of one depth on every region is the gap between the best common depth and independent region-wise selection.
\begin{equation}
\min_{\ell\in\mathcal{S}_{0}}\sum_{b=1}^{N_{\Omega}}\widehat{E}_{b}(\ell)
-\sum_{b=1}^{N_{\Omega}}\min_{\ell\in\mathcal{S}_{0}}\widehat{E}_{b}(\ell)
=\min_{\ell\in\mathcal{S}_{0}}\sum_{b=1}^{N_{\Omega}}
\Bigl[\widehat{E}_{b}(\ell)-\min_{\ell'\in\mathcal{S}_{0}}\widehat{E}_{b}(\ell')\Bigr]\geq0.
\label{eq:depth-composition-gap}
\end{equation}
Each bracket is nonnegative, so the gap is strictly positive exactly when no single depth minimizes every regional error, and the subtracted term is attained by the composite field $\sum_{b}P_{b}h_{\ell_{b}}$, with $\ell_{b}$ a minimizer of $\widehat{E}_{b}$, since Eq.~\ref{eq:parseval} applies to that field as well. The five radial bands serve only to expose the frequency dependence. The spectral ensemble of Section~\ref{sec:spectral-ensemble} applies the same principle to an exhaustive partition of the represented Fourier grid, replaces the selection of one candidate per band by a fitted combination of all of them, fits that combination on a split of the real data disjoint from the one used to train the repair module, and keeps the source prediction among the candidates so that any cell may retain it exactly.

\section{Method: Retain-and-Repair Neural Operators}
\label{sec:method}

Motivated by Section~\ref{sec:motivation}, $\text{R}^{2}\text{NO}$ does not commit to one refinement depth. It keeps the whole trajectory of the repair loop, the source prediction included, and lets real data decide, cell by cell in the Fourier domain, how much of each to use.

\subsection{Overview}
\label{sec:setup}

Figure~\ref{fig:main} shows the two stages. Section~\ref{sec:repair-loop} describes the retain-and-repair loop, which runs the refinement module of Eq.~\ref{eq:irno-trajectory} from the frozen source prediction and keeps its iterates as candidates, and Section~\ref{sec:spectral-ensemble} describes the spectral ensemble, which fits, for each Fourier cell, how the output draws on those candidates and on the source prediction. The two stages use disjoint real data. The repair modules are trained on the training split $\mathcal{D}^{\mathrm{train}}$ of the benchmark, and the ensemble is fitted on its validation split $\mathcal{D}^{\mathrm{fit}}$, on whose targets the modules are never trained. Fitting on the training split instead would be biased, because the repairs look better on their own training data than on new data and an ensemble fitted there would over-trust them relative to the source prediction. The validation split also selects the checkpoint of each repair module among those saved during training, and Appendix~\ref{app:analysis} accounts for this choice. The test split $\mathcal{D}^{\mathrm{test}}$ is used only for evaluation.
The pretrained neural operator $\Psi(\cdot;\pi)$ is first finetuned on $\mathcal{D}^{\mathrm{train}}$ and then frozen, and it maps a real input history $\mathcal{X}\in\mathcal{A}$ to the source prediction $h_{0}=\Psi(\mathcal{X};\pi)\in\mathcal{B}$, where $\mathcal{A}$ and $\mathcal{B}$ are the input and output function spaces in Figure~\ref{fig:main}. This $h_{0}$ coincides with the prediction of the full finetuning baseline, although any other checkpoint could serve as the source. Appendix~\ref{app:pretrained-source} evaluates the simulation-pretrained checkpoint as the source. Since the benchmark's systems report different variables at different resolutions, the modality and resolution mappings it prescribes are applied to the source prediction first, so that $h_{0}$ and the real target $y$ share the same variables, forecast frames, and spatial grid. All residuals and Fourier cells below are defined on this common grid.

\begin{figure}[t]
\centering
\includegraphics[width=\linewidth]{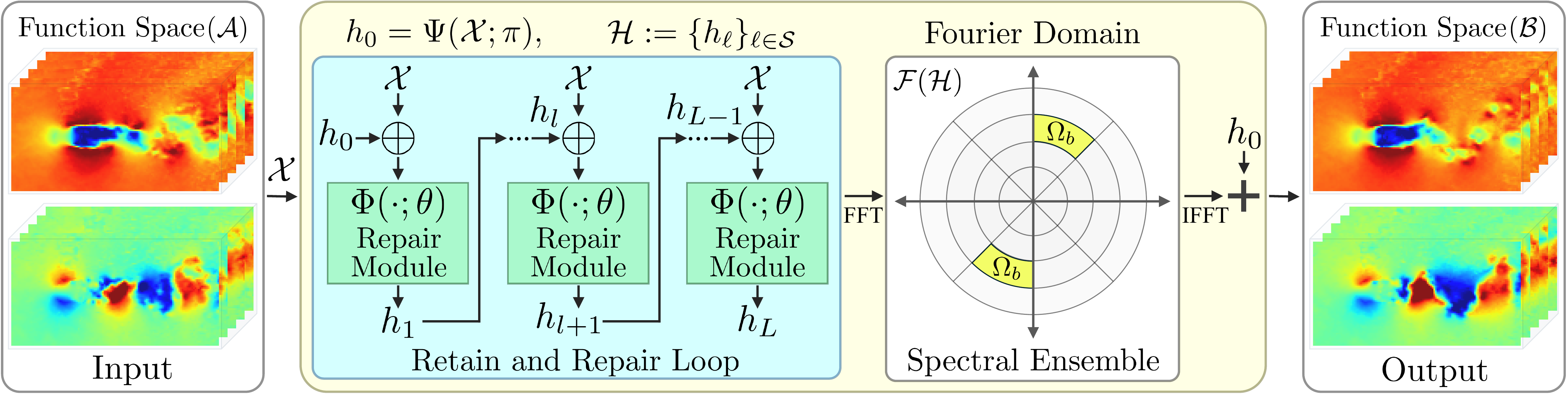}
\caption{\textbf{Overall pipeline of $\text{R}^{2}\text{NO}$.} The retain-and-repair loop applies the shared repair module $L$ times to the frozen source prediction $h_{0}$ and keeps every iterate. The spectral ensemble then fits, for each Fourier cell, how much of the source prediction and of each repair to use.}
\label{fig:main}
\end{figure}

\subsection{Retain-and-Repair Loop on Real Data}
\label{sec:repair-loop}
The repair module $\Phi(\cdot;\theta)$ is applied $L$ times, starting from the source prediction $h_{0}$ with a step size $\alpha>0$, and every iterate is kept as a candidate. When $M$ modules are trained independently from different random seeds, member $m$ produces its own trajectory with parameters $\theta^{(m)}$, and the superscript is dropped for $M=1$.
\begin{equation}
h^{(m)}_{\ell+1}=h^{(m)}_{\ell}+\alpha\,\Phi\!\left(\mathcal{X},h^{(m)}_{\ell};\theta^{(m)}\right),
\qquad h^{(m)}_{0}=h_0,
\qquad \ell=0,\ldots,L-1.
\label{eq:repair}
\end{equation}
The source prediction enters Eq.~\ref{eq:repair} only as its starting point. Retention is enforced later, at the ensemble stage, through the residual form of Eq.~\ref{eq:spectral-output}, so the learned updates remain free to change any part of the field. In Figure~\ref{fig:main}, $\oplus$ denotes the concatenation of the module's inputs.

\paragraph{Training objective.}
Each module is trained on $\mathcal{D}^{\mathrm{train}}$ with the objective and optimization procedure of the iterative refinement baseline~\citep{liu2026iterative}, so that any difference in accuracy is attributable to the ensemble rather than to a modified training signal. The objective supervises every iterate of the loop with three terms: a spatial residual, a spectral residual, and a fixed-point penalty. All three are computed on the channels the benchmark measures (two of the three predicted channels for the fluid systems, one of sixteen for combustion), and in this paragraph $\|v\|^{2}$ denotes the mean of $v^{2}$ over those channels, the forecast frames, and the spatial grid, a mean rather than the Euclidean sum of Section~\ref{sec:motivation}, so that the terms have comparable scale.

The spatial residual of iterate $\ell$ is $\|h^{(m)}_{\ell}-y\|^{2}$. The spectral residual compares the magnitudes of the Fourier coefficients of the iterate and of the target, so it constrains the amplitude spectrum and leaves the phase to the spatial residual. It is averaged over the half-plane $\mathcal{K}_{+}$ of wavenumbers returned by the real-input Fourier transform, the other half being redundant for real fields, and over channels and frames as above, under a weighting $\mu_{\ell}$ that increases with the radius $\rho(k)\in[0,\sqrt{2}]$ of the wavenumber normalized per axis by the Nyquist wavenumber:
\begin{equation}
\mathcal{L}_{\mathrm{spe}}^{\ell}=\frac{1}{\lvert\mathcal{K}_{+}\rvert}\sum_{k\in\mathcal{K}_{+}}\mu_{\ell}(k)
\Bigl(\bigl|[\mathcal{F}h^{(m)}_{\ell}](k)\bigr|-\bigl|[\mathcal{F}y](k)\bigr|\Bigr)^{2},
\qquad
\mu_{\ell}(k)\propto1+\rho(k)^{\eta_{\ell}},
\label{eq:training-terms}
\end{equation}
with $\mu_{\ell}$ in Eq.~\ref{eq:training-terms} normalized to unit mean over $\mathcal{K}_{+}$. The exponent $\eta_{\ell}=1+\frac{\ell-1}{L-1}$ rises from one at the first iterate to two at the last, so later iterates are pushed harder on high wavenumbers. The additive unit keeps the lowest wavenumbers from being discarded, and the weight at the corner of the grid is $1+2^{\eta_{\ell}/2}$ times that at the zero wavenumber, between $2.41$ and $3.00$. The unit-mean normalization keeps the overall scale of the term independent of $\eta_{\ell}$. The fixed-point penalty $\|\Phi(\mathcal{X},y;\theta^{(m)})\|^{2}$ evaluates the module at the real target and penalizes the correction it would still apply there, so that the target is encouraged to be a fixed point of the repair map~\citep{hsieh2019learning}.

Writing $\widehat{\mathbb{E}}_{\mathrm{train}}$ for the empirical average over $\mathcal{D}^{\mathrm{train}}$, the objective averages the spatial and spectral residuals over the $L$ iterates and adds the penalty, with $\beta_{\mathrm{spe}},\beta_{\mathrm{fp}}>0$ weighting the spectral residual and the fixed-point penalty against the spatial residual:
\begin{equation}
\mathcal{L}\!\left(\theta^{(m)}\right)=
\widehat{\mathbb{E}}_{\mathrm{train}}\!\left[\frac{1}{L}\sum_{\ell=1}^{L}
\Bigl(\bigl\|h^{(m)}_{\ell}-y\bigr\|^{2}+\beta_{\mathrm{spe}}\,\mathcal{L}_{\mathrm{spe}}^{\ell}\Bigr)
+\beta_{\mathrm{fp}}\,\bigl\|\Phi\!\left(\mathcal{X},y;\theta^{(m)}\right)\bigr\|^{2}\right].
\label{eq:training}
\end{equation}
Only $\theta^{(m)}$ is updated by minimizing Eq.~\ref{eq:training}; the pretrained operator remains fixed.

\subsection{Spectral Ensemble Across Refinement Depths}
\label{sec:spectral-ensemble}

\paragraph{Fourier cells.}
The regions of Section~\ref{sec:motivation} are refined into cells that separate frequency magnitude, orientation, and channel. For a spatial wavenumber $k$, let $\rho(k)$ be the normalized radius of Section~\ref{sec:repair-loop}, and let $\varphi(k)\in[0,1)$ be the orientation of $k$ modulo a half-turn, expressed as a fraction of the half-turn. The spectrum is divided into $N_{\rho}$ radial bands and $N_{\varphi}$ angular sectors, and the band and the sector containing $k$ are
\begin{equation}
b_{\rho}(k)=\min\bigl(\lfloor\rho(k)\,N_{\rho}\rfloor,\,N_{\rho}-1\bigr),\qquad
b_{\varphi}(k)=\min\bigl(\lfloor\varphi(k)\,N_{\varphi}\rfloor,\,N_{\varphi}-1\bigr).
\label{eq:cell-index}
\end{equation}
The bands have equal width in $\rho$ and the sectors equal width in $\varphi$. The clipping in Eq.~\ref{eq:cell-index} assigns the coefficients beyond the inscribed ellipse of the grid, where $\rho(k)\geq1$, to the outermost band. A cell consists of the coefficients of one measured channel that share a band and a sector, so there are $N_{\Omega}=N_{\rho}N_{\varphi}N_{\mathrm{ch}}$ cells for every system, in place of the five bands of Section~\ref{sec:motivation}. The projection $P_{b}$ applies the mask of Eq.~\ref{eq:projector} to the channel of cell $b$ and returns zero on the other channels. Since $k$ and $-k$ share an orientation modulo a half-turn, a coefficient and its conjugate always fall in the same cell, which is why the two highlighted sectors in Figure~\ref{fig:main} share one weight and every $P_{b}$ maps real fields to real fields. The cells are disjoint, cover every coefficient of the measured channels, and are shared across forecast frames.

\paragraph{Candidate columns.}
Let $\mathcal{S}\subseteq\mathcal{S}_{0}\setminus\{0\}$ be the retained depths and $\mathcal{H}:=\{h_{\ell}\}_{\ell\in\mathcal{S}}$ the retained iterates of a module, as drawn in Figure~\ref{fig:main}. The columns of the ensemble are the residuals of every retained iterate relative to the source prediction, together with one base column,
\begin{equation}
c_{1},\dots,c_{J}\in\bigl\{h^{(m)}_{\ell}-h_{0}:\ m\leq M,\ \ell\in\mathcal{S}\bigr\}\cup\{-h_{0}\},\qquad J=M\lvert\mathcal{S}\rvert+1.
\label{eq:columns}
\end{equation}
Without the base column, a cell could only form affine combinations of $h_{0}$ and the iterates, and the base column lets it also attenuate the source prediction where its content does not transfer. The repair budget is $M$ times the largest retained depth in evaluations of $\Phi$, since every shallower depth is produced on the way at no extra cost.

\paragraph{Predictor and closed-form fit.}
The predictor is a residual correction of the source prediction, in which every cell adds its own combination of the columns.
\begin{equation}
\mathcal{R}(\mathcal{X})=h_{0}+\sum_{b=1}^{N_{\Omega}}\sum_{j=1}^{J}w_{b,j}\,P_{b}c_{j}.
\label{eq:spectral-output}
\end{equation}
All-zero weights reproduce $h_{0}$ exactly, and since every cell lies in a measured channel, the unmeasured channels of the output are always those of $h_{0}$. The weights $w_{b}\in\mathbb{R}^{J}$ of each cell are chosen to minimize the squared error on the fitting split, compared on the measured channels, with $\widehat{\mathbb{E}}_{\mathrm{fit}}$ the empirical average over $\mathcal{D}^{\mathrm{fit}}$. By Eq.~\ref{eq:parseval} and the mutual orthogonality of the $P_{b}$, this error decomposes into one term per cell, and each term involves only the weights of that cell.
\begin{equation}
\widehat{\mathbb{E}}_{\mathrm{fit}}\bigl\|y-\mathcal{R}(\mathcal{X})\bigr\|_{2}^{2}
=\sum_{b=1}^{N_{\Omega}}\widehat{\mathbb{E}}_{\mathrm{fit}}
\Bigl\|P_{b}(y-h_{0})-\sum_{j=1}^{J}w_{b,j}\,P_{b}c_{j}\Bigr\|_{2}^{2}.
\label{eq:combination-objective}
\end{equation}
The fit therefore decouples into $N_{\Omega}$ independent least-squares problems of dimension $J$, and each of them is solved in closed form. Successive iterates are nearly collinear, so a plain least-squares solution would be unstable, and a ridge penalty is added. Let $\langle\cdot,\cdot\rangle$ denote the Euclidean inner product of vectorized fields. The Gram matrix of cell $b$ collects the inner products of the projected columns with one another, and the right-hand side collects their inner products with the projected target residual.
\begin{equation}
[\mathbf{G}_{b}]_{jj'}=\widehat{\mathbb{E}}_{\mathrm{fit}}\bigl\langle P_{b}c_{j},P_{b}c_{j'}\bigr\rangle,\qquad
[\mathbf{r}_{b}]_{j}=\widehat{\mathbb{E}}_{\mathrm{fit}}\bigl\langle P_{b}(y-h_{0}),P_{b}c_{j}\bigr\rangle.
\label{eq:normal-equations}
\end{equation}
The ridge term is scaled by the mean diagonal entry of $\mathbf{G}_{b}$ so that the regularization is comparable across cells, and the weights of cell $b$ follow by solving a $J\times J$ linear system.
\begin{equation}
w_{b}=\Bigl(\mathbf{G}_{b}+\lambda\,\frac{\operatorname{tr}\mathbf{G}_{b}}{J}\,\mathbf{I}\Bigr)^{-1}\mathbf{r}_{b}.
\label{eq:ridge-solution}
\end{equation}
Here $\mathbf{I}$ is the identity matrix and $\lambda>0$ is the ridge parameter, chosen once by solving on one half of the fitting split and scoring on the other over a fixed logarithmic grid. Eq.~\ref{eq:ridge-solution} is then re-solved on the whole split, and the weights are held fixed afterwards. No gradient descent is involved. Because $\mathcal{F}$ is unitary, $\|P_{b}v\|_{2}=\|\mathbf{1}_{\Omega_{b}}\odot\mathcal{F}v\|_{2}$, so in practice the Gram matrices are accumulated from the masked Fourier coefficients of one forward transform per field, and the output is assembled with a single inverse transform, as drawn in Figure~\ref{fig:main}. All fields enter the fit in physical, denormalized units.

\paragraph{Properties of the fitted ensemble.}
The weights $w_{b}$ are unconstrained, so a cell can reproduce any single column with a one-hot $w_{b}$ or interpolate between columns, and neighboring cells may make different choices without any prescribed relation between position in the spectrum and depth. Setting $N_{\Omega}=1$ recovers a single global combination. Since Eq.~\ref{eq:combination-objective} is minimized cell by cell over all of these choices, at $\lambda=0$ the fitting error of $\mathcal{R}$ is no larger than that of the source prediction, of any single candidate $h^{(m)}_{\ell}$, or of any cell-wise selection among them, and when the source checkpoint is the finetuned baseline, no larger than that of the baseline either. This is the ensemble counterpart of the gap in Eq.~\ref{eq:depth-composition-gap}, stated for combinations rather than selections, and Appendix~\ref{app:analysis} gives the proof. The guarantee concerns the fitting split. The ridge term trades a controlled part of it for stability, and because $\mathcal{D}^{\mathrm{fit}}$ is disjoint from the split on which the candidates were trained, Appendix~\ref{app:analysis} also bounds how a cell-wise selection generalizes beyond it.

\paragraph{Multiple repair modules.}
For $M>1$ the modules share the frozen source checkpoint and differ only in their random seed, as in deep ensembles~\citep{lakshminarayanan2017simple}, and their iterates enter Eq.~\ref{eq:columns} as additional columns of one joint fit. Since uniform averaging is one admissible choice of $w_{b}$ in Eq.~\ref{eq:combination-objective}, the joint fit is never worse on the fitting split than averaging $M$ separately fitted ensembles, and it lets a cell take a different depth from each member. The two enlargements differ in cost: adding depths to $\mathcal{S}$ reuses iterates already produced and only enlarges the $J\times J$ systems, whereas each additional member costs a training run and a rollout at inference.

Finally, convergence of the repair loop and accuracy of its iterates are different properties. Under a local contraction assumption, the loop converges to a fixed point whose distance to the target is controlled by the correction the module would still apply at the target, and even then the best depth can differ across regions. Appendix~\ref{app:analysis} makes both statements precise. The ensemble therefore combines $h_0$ and its repairs without requiring $\mathcal{R}$ to be a fixed point of any repair map.

\begin{table}[t]
\centering
\caption{Real-world adaptation on RealPDEBench. All adapted methods start from the same finetuned checkpoint within each backbone, IRNO is reported at its final depth and at the depth $\ell^{\star}$ selected on the fitting split, and $M$ counts the repair modules of $\text{R}^{2}\text{NO}$. Lower is better, and the best value of each column within a backbone is set in bold, with ties marked jointly.}
\label{tab:main}
\small
\setlength{\tabcolsep}{2.5pt}
\renewcommand{\arraystretch}{1.04}
\resizebox{\columnwidth}{!}{%
\begin{tabular}{@{}ll*{10}{c}@{}}
\toprule
& & \multicolumn{2}{c}{Cylinder}
    & \multicolumn{2}{c}{\shortstack{Controlled\\Cylinder}}
    & \multicolumn{2}{c}{FSI}
    & \multicolumn{2}{c}{Foil}
    & \multicolumn{2}{c}{Combustion} \\
\cmidrule(lr){3-4}\cmidrule(lr){5-6}\cmidrule(lr){7-8}
\cmidrule(lr){9-10}\cmidrule(lr){11-12}
Backbone & Adaptation
    & \shortstack{RMSE\\$(\times10^{-2})$} & \shortstack{fRMSE\\$(\times10^{-3})$}
    & \shortstack{RMSE\\$(\times10^{-2})$} & \shortstack{fRMSE\\$(\times10^{-3})$}
    & \shortstack{RMSE\\$(\times10^{-2})$} & \shortstack{fRMSE\\$(\times10^{-3})$}
    & \shortstack{RMSE\\$(\times10^{-2})$} & \shortstack{fRMSE\\$(\times10^{-3})$}
    & \shortstack{RMSE\\$(\times10^{-2})$} & \shortstack{fRMSE\\$(\times10^{-3})$} \\
\midrule
\multirow{5}{*}{U-Net}
    & Full finetuning & 6.315 & 9.731 & 0.786 & 0.924 & 0.841 & 0.714 & 0.936 & 0.825 & 2.134 & 2.487 \\
    & IRNO ($\ell{=}12$) & 5.982 & 9.198 & 0.788 & 0.929 & 0.876 & 0.739 & 0.935 & 0.820 & 2.159 & 2.526 \\
    & IRNO ($\ell^{\star}$) & 5.970 & 9.172 & 0.778 & 0.912 & 0.838 & 0.700 & 0.922 & 0.819 & 2.133 & 2.489 \\
    & $\text{R}^{2}\text{NO}$ ($M{=}1$) & 3.510 & 5.181 & \textbf{0.776} & 0.907 & \textbf{0.836} & \textbf{0.697} & 0.911 & 0.797 & 2.121 & 2.466 \\
    & $\text{R}^{2}\text{NO}$ ($M{=}3$) & \textbf{2.775} & \textbf{3.912} & \textbf{0.776} & \textbf{0.905} & \textbf{0.836} & 0.698 & \textbf{0.905} & \textbf{0.784} & \textbf{2.119} & \textbf{2.462} \\
\midrule
\multirow{5}{*}{DeepONet}
    & Full finetuning & 6.608 & 10.698 & 2.913 & 5.186 & 3.315 & 4.762 & 2.256 & 2.893 & 2.270 & 2.746 \\
    & IRNO ($\ell{=}12$) & 4.061 & 6.179 & 0.956 & 1.234 & 1.114 & 1.147 & 1.477 & 1.900 & 2.188 & 2.593 \\
    & IRNO ($\ell^{\star}$) & 4.061 & 6.179 & 0.956 & 1.234 & 1.114 & 1.147 & 1.477 & 1.900 & 2.188 & 2.593 \\
    & $\text{R}^{2}\text{NO}$ ($M{=}1$) & 3.121 & 3.984 & 0.854 & 1.029 & 0.971 & 0.856 & 1.316 & 1.611 & 2.159 & 2.536 \\
    & $\text{R}^{2}\text{NO}$ ($M{=}3$) & \textbf{2.703} & \textbf{3.315} & \textbf{0.831} & \textbf{0.981} & \textbf{0.917} & \textbf{0.775} & \textbf{1.131} & \textbf{1.321} & \textbf{2.123} & \textbf{2.457} \\
\midrule
\multirow{5}{*}{FNO}
    & Full finetuning & 5.449 & 8.661 & 0.943 & 1.077 & 1.266 & 1.162 & 1.199 & 1.241 & 2.246 & 2.726 \\
    & IRNO ($\ell{=}12$) & 4.920 & 7.984 & 0.818 & 0.956 & 1.038 & 1.010 & 1.157 & 1.200 & 2.261 & 2.716 \\
    & IRNO ($\ell^{\star}$) & 4.913 & 7.972 & 0.818 & 0.956 & 1.037 & 1.010 & 1.130 & 1.122 & 2.239 & 2.707 \\
    & $\text{R}^{2}\text{NO}$ ($M{=}1$) & 3.912 & 5.665 & 0.813 & 0.947 & 1.019 & 0.970 & 1.081 & 1.055 & 2.207 & 2.671 \\
    & $\text{R}^{2}\text{NO}$ ($M{=}3$) & \textbf{3.432} & \textbf{4.919} & \textbf{0.804} & \textbf{0.934} & \textbf{0.985} & \textbf{0.927} & \textbf{1.061} & \textbf{1.019} & \textbf{2.201} & \textbf{2.659} \\
\midrule
\multirow{5}{*}{CNO}
    & Full finetuning & 3.524 & 5.709 & 0.798 & 0.903 & 0.960 & 0.779 & 1.145 & 1.343 & 2.370 & 2.954 \\
    & IRNO ($\ell{=}12$) & 3.223 & 4.687 & 0.785 & 0.902 & 0.858 & 0.710 & 1.062 & 1.119 & 2.220 & 2.632 \\
    & IRNO ($\ell^{\star}$) & 3.223 & 4.683 & 0.782 & 0.898 & 0.858 & 0.709 & 1.040 & 1.099 & 2.219 & 2.637 \\
    & $\text{R}^{2}\text{NO}$ ($M{=}1$) & 2.838 & 3.846 & \textbf{0.781} & \textbf{0.896} & 0.856 & 0.702 & 0.996 & 1.019 & 2.199 & 2.606 \\
    & $\text{R}^{2}\text{NO}$ ($M{=}3$) & \textbf{2.525} & \textbf{3.220} & \textbf{0.781} & \textbf{0.896} & \textbf{0.848} & \textbf{0.693} & \textbf{0.979} & \textbf{0.953} & \textbf{2.164} & \textbf{2.534} \\
\midrule
\multirow{5}{*}{Transolver}
    & Full finetuning & 8.641 & 12.887 & 1.665 & 2.156 & 2.260 & 2.507 & 1.965 & 2.243 & 3.744 & 4.943 \\
    & IRNO ($\ell{=}12$) & 4.912 & 7.750 & 0.883 & 1.021 & 1.024 & 0.822 & 1.243 & 1.370 & 2.442 & 2.858 \\
    & IRNO ($\ell^{\star}$) & 4.912 & 7.750 & 0.883 & 1.021 & 1.024 & 0.822 & 1.243 & 1.370 & 2.396 & 2.770 \\
    & $\text{R}^{2}\text{NO}$ ($M{=}1$) & 3.305 & 4.439 & 0.841 & 0.976 & 0.964 & 0.752 & 1.127 & 1.178 & 2.209 & 2.604 \\
    & $\text{R}^{2}\text{NO}$ ($M{=}3$) & \textbf{2.421} & \textbf{3.011} & \textbf{0.819} & \textbf{0.934} & \textbf{0.948} & \textbf{0.731} & \textbf{1.073} & \textbf{1.095} & \textbf{2.179} & \textbf{2.542} \\
\midrule
\multirow{5}{*}{DPOT-S}
    & Full finetuning & 4.404 & 6.726 & 0.847 & 0.987 & 0.990 & 0.737 & 1.093 & 1.178 & 2.106 & 2.408 \\
    & IRNO ($\ell{=}12$) & 4.151 & 6.399 & 0.799 & 0.943 & 0.869 & 0.733 & 1.057 & 1.155 & 2.108 & 2.350 \\
    & IRNO ($\ell^{\star}$) & 4.151 & 6.399 & 0.799 & 0.943 & 0.869 & 0.733 & 1.057 & 1.155 & 2.082 & 2.343 \\
    & $\text{R}^{2}\text{NO}$ ($M{=}1$) & 3.224 & 4.990 & 0.795 & 0.931 & 0.866 & 0.729 & 1.037 & 1.140 & 2.064 & 2.327 \\
    & $\text{R}^{2}\text{NO}$ ($M{=}3$) & \textbf{2.617} & \textbf{3.665} & \textbf{0.789} & \textbf{0.914} & \textbf{0.860} & \textbf{0.726} & \textbf{1.016} & \textbf{1.102} & \textbf{2.058} & \textbf{2.318} \\
\bottomrule
\end{tabular}}
\end{table}

\section{Experiments}
\label{sec:experiments}
In this section, we compare $\text{R}^{2}\text{NO}$ with the baselines on RealPDEBench and then examine the contribution of each of its components through ablations.
\subsection{Real-World Adaptation on RealPDEBench}
\label{sec:real-world-adaptation}

We evaluate whether the spectral ensemble improves real-world prediction across physical systems and pretrained backbones, following the real-world finetuning task of RealPDEBench~\citep{hu2026realpdebench}, in which simulation-pretrained operators are adapted on real training trajectories and evaluated on held-out real observations. The five systems are Cylinder, Controlled Cylinder, fluid--structure interaction (FSI), Foil, and Combustion, and we retain the data partitions, preprocessing, observed variables, conditioning information, input and output windows, and evaluation procedure of the benchmark. Its training split serves as $\mathcal{D}^{\mathrm{train}}$ for the repair modules, its validation split serves as the fitting split $\mathcal{D}^{\mathrm{fit}}$ on which the spectral ensemble and its ridge parameter are solved, and its test split is used only to compute the reported errors. Appendix~\ref{app:setup} records the protocol in full.

\paragraph{Backbones and adaptation methods.} We evaluate six backbones, namely U-Net~\citep{ronneberger2015u}, DeepONet~\citep{lu2021learning}, CNO~\citep{raonic2023convolutional}, FNO~\citep{li2020fourier}, Transolver~\citep{wu2024transolver}, and DPOT-S~\citep{hao2024dpot}, and Table~\ref{tab:main} compares five adaptation settings for each of them. Full finetuning takes a backbone pretrained on simulation data and updates all its parameters on the real training split, and the resulting checkpoint serves as the frozen source operator of every other setting, so its prediction is exactly the source prediction $h_{0}$. IRNO~\citep{liu2026iterative} trains a shared refinement module on top of this checkpoint with the objective of Eq.~\ref{eq:training}, and we report it at its final iterate $h_{12}$ and at the iterate $h_{\ell^{\star}}$ whose depth $\ell^{\star}\in\{0,\dots,12\}$ minimizes the RMSE on the fitting split, which realizes the best common depth of Eq.~\ref{eq:depth-composition-gap} on the same data as the spectral ensemble. $\text{R}^{2}\text{NO}$ with $M=1$ combines all twelve iterates of the same module with the source prediction through the spectral ensemble of Section~\ref{sec:spectral-ensemble}, and $\text{R}^{2}\text{NO}$ with $M=3$ adds two repair modules trained with different seeds and fits a single ensemble over the columns of all three.

\paragraph{Comparison protocol.} IRNO and $\text{R}^{2}\text{NO}$ with $M=1$ share the frozen checkpoint, the trained repair module, and the twelve evaluations of $\Phi$, so they differ only in how the trajectory is read out, and their comparison isolates the contribution of the spectral ensemble. The setting with $M=3$ uses three training runs and three times this repair budget, and Section~\ref{sec:ablation} separates its gain from the additional budget. Appendix~\ref{app:setup} lists the remaining implementation choices. 

\paragraph{Metrics.} We report the root mean squared error (RMSE) and the frequency-domain root mean squared error (fRMSE) of~\citet{takamoto2022pdebench}, both on the measured channels of the test split. The fRMSE is sensitive to spectral content and therefore complements the RMSE for a method that composes its output in the Fourier domain. Appendix~\ref{app:additional-tables} reports the relative $L_{2}$ error.

\paragraph{Main results.} $\text{R}^{2}\text{NO}$ with $M=3$ attains the lowest RMSE in all thirty pairs of Table~\ref{tab:main}, up to ties at the reported precision, and the lowest fRMSE in twenty-nine, the exception being U-Net on FSI, where the single-module setting is lower by under $0.2$ percent. At a matched budget, $\text{R}^{2}\text{NO}$ with $M=1$ improves on IRNO at either depth in every pair on both metrics while reading the same trajectory. Selecting the depth of IRNO on the fitting split changes little, since the final depth is selected in fourteen of the thirty pairs and the summed RMSE and fRMSE fall by only $0.4$ percent each, and summed over the thirty pairs $\text{R}^{2}\text{NO}$ with $M=1$ still lowers the RMSE of the tuned baseline by $14.5$ percent and its fRMSE by $20.6$ percent. The gain therefore comes from combining depths cell by cell rather than from choosing one depth well, and the larger reduction in fRMSE agrees with the diagnostic of Section~\ref{sec:motivation}, because a single depth over-refines high-frequency components that the ensemble can instead draw from earlier iterates or the source prediction. Adding two repair modules lowers the summed errors by a further $8.1$ and $11.7$ percent.

\paragraph{Retention of the source prediction.} At its final depth, IRNO fails to improve on full finetuning in five of the thirty pairs under RMSE and in three under fRMSE, namely for U-Net on FSI, Controlled Cylinder, and Combustion and, under RMSE, for DPOT-S and FNO on Combustion, where the source prediction is already accurate and a common refinement depth degrades components that required no repair. Selecting the depth removes these failures only by halting the repair of the entire field after two to four steps, which retains less of the improvement that $\text{R}^{2}\text{NO}$ with $M=1$ obtains in the same pairs and still leaves U-Net on Combustion worse than full finetuning under fRMSE. $\text{R}^{2}\text{NO}$ instead retains the source prediction only in cells where it is accurate and repairs the others, and it improves on full finetuning in every pair with either number of repair modules. This is the behavior that Proposition~\ref{prop:composition} guarantees on the fitting split, since the all-zero weighting that reproduces the source prediction is always feasible, and it carries over to the test split in every case we evaluate.
\begin{table}[t]
\centering
\caption{Ablation of the combination rule at a matched repair budget. All rows use the same three repair modules rolled out to $L=12$ and differ only in how their iterates are combined. Lower is better, and the best value of each column within a backbone is set in bold.}
\label{tab:ablation}
\small
\setlength{\tabcolsep}{2.5pt}
\renewcommand{\arraystretch}{1.04}
\resizebox{\columnwidth}{!}{%
\begin{tabular}{@{}ll*{10}{c}@{}}
\toprule
& & \multicolumn{2}{c}{Cylinder}
    & \multicolumn{2}{c}{\shortstack{Controlled\\Cylinder}}
    & \multicolumn{2}{c}{FSI}
    & \multicolumn{2}{c}{Foil}
    & \multicolumn{2}{c}{Combustion} \\
\cmidrule(lr){3-4}\cmidrule(lr){5-6}\cmidrule(lr){7-8}
\cmidrule(lr){9-10}\cmidrule(lr){11-12}
Backbone & Combination
    & \shortstack{RMSE\\$(\times10^{-2})$} & \shortstack{fRMSE\\$(\times10^{-3})$}
    & \shortstack{RMSE\\$(\times10^{-2})$} & \shortstack{fRMSE\\$(\times10^{-3})$}
    & \shortstack{RMSE\\$(\times10^{-2})$} & \shortstack{fRMSE\\$(\times10^{-3})$}
    & \shortstack{RMSE\\$(\times10^{-2})$} & \shortstack{fRMSE\\$(\times10^{-3})$}
    & \shortstack{RMSE\\$(\times10^{-2})$} & \shortstack{fRMSE\\$(\times10^{-3})$} \\
\midrule
\multirow{4}{*}{U-Net}
    & Mean of final iterates & 5.841 & 9.018 & 0.783 & 0.920 & 0.867 & 0.737 & 0.927 & 0.820 & 2.141 & 2.497 \\
    & Mean of all iterates & 5.973 & 9.211 & 0.778 & 0.912 & 0.841 & 0.702 & 0.922 & 0.820 & 2.134 & 2.490 \\
    & Global combination & 5.213 & 8.110 & 0.778 & 0.912 & 0.837 & 0.699 & 0.921 & 0.818 & 2.131 & 2.497 \\
    & $\text{R}^{2}\text{NO}$ & \textbf{2.775} & \textbf{3.912} & \textbf{0.776} & \textbf{0.905} & \textbf{0.836} & \textbf{0.698} & \textbf{0.905} & \textbf{0.784} & \textbf{2.119} & \textbf{2.462} \\
\midrule
\multirow{4}{*}{FNO}
    & Mean of final iterates & 4.868 & 7.843 & 0.811 & 0.952 & 1.017 & 0.991 & 1.135 & 1.152 & 2.248 & 2.700 \\
    & Mean of all iterates & 5.012 & 8.067 & 0.828 & 0.981 & 1.040 & 1.029 & 1.127 & 1.122 & 2.240 & 2.700 \\
    & Global combination & 4.463 & 6.925 & 0.808 & 0.946 & 1.010 & 0.971 & 1.120 & 1.100 & 2.235 & 2.715 \\
    & $\text{R}^{2}\text{NO}$ & \textbf{3.432} & \textbf{4.919} & \textbf{0.804} & \textbf{0.934} & \textbf{0.985} & \textbf{0.927} & \textbf{1.061} & \textbf{1.019} & \textbf{2.201} & \textbf{2.659} \\
\midrule
\multirow{4}{*}{DPOT-S}
    & Mean of final iterates & 4.080 & 6.501 & 0.793 & 0.937 & 0.866 & 0.729 & 1.051 & 1.147 & 2.100 & 2.331 \\
    & Mean of all iterates & 4.168 & 6.569 & 0.801 & 0.947 & 0.886 & 0.731 & 1.055 & 1.151 & 2.082 & 2.329 \\
    & Global combination & 3.829 & 5.990 & 0.793 & 0.931 & 0.864 & 0.729 & 1.048 & 1.139 & 2.071 & 2.333 \\
    & $\text{R}^{2}\text{NO}$ & \textbf{2.617} & \textbf{3.665} & \textbf{0.789} & \textbf{0.914} & \textbf{0.860} & \textbf{0.726} & \textbf{1.016} & \textbf{1.102} & \textbf{2.058} & \textbf{2.318} \\
\bottomrule
\end{tabular}}
\end{table}

\subsection{Ablations}
\label{sec:ablation}

\paragraph{Combination rule at a matched budget.} Table~\ref{tab:ablation} asks whether the gain of $\text{R}^{2}\text{NO}$ with $M=3$ comes from the additional repair modules or from how their iterates are combined, so every row uses the same three modules and thirty-six evaluations of $\Phi$ on U-Net, FNO, and DPOT-S and differs only in the combination rule. The mean of final iterates averages the three final iterates, the mean of all iterates averages all thirty-six, and the global combination fits Eq.~\ref{eq:ridge-solution} on the single cell $N_{\Omega}=1$ with the same ridge selection. Relative to full finetuning and summed over the fifteen pairs, these rules reduce the RMSE by $6.4$, $5.3$, and $10.9$ percent and the fRMSE by $5.5$, $4.4$, and $11.5$ percent, whereas the full partition reaches $26.4$ and $32.8$ percent, and weighting every depth alike is worse than using the final depth alone. $\text{R}^{2}\text{NO}$ attains the lowest error in every column, so the dominance that Proposition~\ref{prop:composition} establishes on the fitting split, where both uniform averages are feasible, also holds on the test split. Removing any single axis of the partition raises the summed RMSE from $0.232$ to between $0.241$ and $0.248$, as reported in Appendix~\ref{app:additional-ablations}.

\begin{wrapfigure}{r}{0.415\textwidth}
\vspace{-10pt}
\centering
\includegraphics[width=\linewidth]{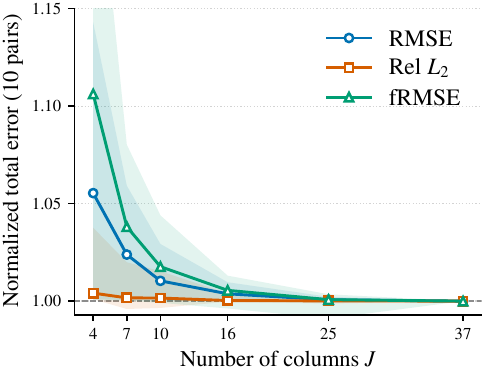}
\caption{\textbf{Number of columns.} Each curve is normalized by its value at thirty-seven columns, and every configuration consumes thirty-six evaluations of $\Phi$.}
\label{fig:readout-columns}
\vspace{-10pt}
\end{wrapfigure}

\paragraph{Two axes that consume no repair budget.} The retained depths and the ridge parameter enter only the closed-form solve of Eq.~\ref{eq:ridge-solution}, so neither adds an evaluation of $\Phi$, and every configuration in Figures~\ref{fig:readout-columns} and~\ref{fig:ridge} consumes the same thirty-six evaluations. Figure~\ref{fig:readout-columns} varies the number of columns $J$ by retaining fewer depths in $\mathcal{S}$, and although the rollout passes through every intermediate iterate so that four and thirty-seven columns cost the same, the error falls throughout. Summed over the ten pairs formed by FNO and DPOT-S, the RMSE drops by $5.2$ percent and the fRMSE by $9.6$ percent, whereas the relative $L_{2}$ error moves by $0.4$ percent, so the same axis is worth more than twenty times as much on one metric as on another. All three curves are flat beyond twenty-five columns, and we retain every depth at the operating point because nothing remains to select rather than because the last twelve columns improve accuracy.

\begin{wrapfigure}{r}{0.415\textwidth}
\vspace{-15pt}
\centering
\includegraphics[width=\linewidth]{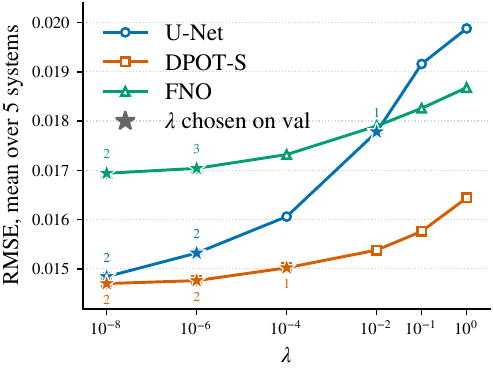}
\caption{\textbf{Ridge parameter.} RMSE averaged over the five systems. Dotted lines mark the source prediction, and stars mark the values selected on the fitting split, with the number of systems selecting each.}
\label{fig:ridge}
\vspace{-10pt}
\end{wrapfigure}

Figure~\ref{fig:ridge} varies the ridge parameter over eight orders of magnitude. Its two extremes bracket the method, since $\lambda\to0$ recovers the unregularized combination of Proposition~\ref{prop:composition} and $\lambda\to\infty$ drives every weight to zero and therefore returns the source prediction, because Eq.~\ref{eq:spectral-output} expresses the output as a correction to that prediction. The error accordingly rises by between ten and thirty-four percent across the sweep and then stops, and every curve stays below the corresponding source prediction throughout, so a poorly chosen $\lambda$ degrades the combination towards its starting point rather than away from it. The value selected on a held-out half of the fitting split lies on the flat left portion of every curve and coincides with the best value on the test split, or is indistinguishable from it at the reported precision. That value is selected per system rather than per backbone, so one backbone may receive different values in different systems.

% Complete the pending entries before drawing conclusions about IRNO or M=3.
% Existing M=1 values are retained from the preceding manuscript.
% Compute M=3 metrics from averaged predictions of two selected runs.
% An M=3 result or an average of individual-run errors is not an M=3 result.
% If repeated runs are summarized, identify the runs or ensemble pairs used.

\section{Conclusion}
\label{sec:conclusion}

We introduced $\text{R}^{2}\text{NO}$, which adapts a simulation-pretrained neural operator to real measurements by repairing its finetuned prediction with a shared module and composing the final field, cell by cell in the Fourier domain, from the source prediction and every iterate of the repair trajectory. On five RealPDEBench systems and six backbones, the composition consistently improves on both full finetuning and iterative refinement at the same repair budget, and the ablations attribute this gain to the cell-wise combination rather than to the additional repair modules. These results suggest that adaptation depth is better fitted per spectral component than chosen once for the entire field.
\paragraph{Limitations.} The fixed-point analysis is local and conditional, and the dominance guarantee holds on the fitting split rather than unseen data, so the cell-wise weights depend on a fitting split of adequate size. Spectral composition minimizes error rather than enforcing physical constraints, so the composed field need not satisfy the relations the governing equations impose on its Fourier coefficients. The three-module configuration triples the training and repair computation of iterative refinement, although the single-module configuration already improves on it at the same budget.
% \clearpage

\subsection*{AI Use Statement}
Generative AI tools were used to assist with revising the manuscript. Specifically, they were used to improve the clarity and presentation of the text during the editing process. The authors reviewed and verified all AI-assisted content and take full responsibility for the final manuscript.

\subsection*{Reproducibility Statement}

All experiments use the public RealPDEBench benchmark under its standard protocol. Section~\ref{sec:method} describes the method, Appendix~\ref{app:training-details} lists all training, fitting, and hardware settings, and Appendix~\ref{app:diagnostic-setup} describes the diagnostic of Figure~\ref{fig:dist_fourier}. We will release the code and trained repair modules upon publication.

\bibliographystyle{iclr2027_conference}
\bibliography{iclr2027_conference}

\clearpage
\appendix

\section{Related Work}
\label{sec:related}
\paragraph{Neural operators from simulation to measurements.}
Neural operators such as FNO~\citep{li2020fourier}, DeepONet~\citep{lu2021learning}, CNO~\citep{raonic2023convolutional}, and Transolver~\citep{wu2024transolver} learn solution maps of PDEs from simulated fields and now serve as surrogates in weather forecasting~\citep{pathak2022fourcastnet, bonev2025fourcastnet, bonev2023spherical}, satellite methane detection~\citep{heo2026flame}, subsurface carbon storage~\citep{wen2023real, chandra2025fourier}, and materials mechanics~\citep{you2022learning}. Pretraining on many simulated systems, as in DPOT~\citep{hao2024dpot}, yields representations that transfer across systems, but still within simulation. RealPDEBench~\citep{hu2026realpdebench} pairs such simulations with real measurements of the same systems, and its real-world finetuning task is the setting we adopt. Finetuning on a few real trajectories is the natural first step, but it can overwrite structure that pretraining had captured. 
A two-phase remedy, pretraining across a family of PDEs and then adapting only a handful of coefficients to a target instance, has been explored for physics-informed neural networks~\citep{cho2023hypernetwork, cho2024parameterized}. $\text{R}^{2}\text{NO}$ follows the two-phase algorithm in the output space. It finetunes first, then corrects the prediction of the finetuned operator with a repair module and decides, cell by cell in the Fourier domain, how much of that prediction to keep.
 
\paragraph{Iterative refinement and the combination of its iterates.}
Reusing one block over many steps separates computational depth from parameter count. Looped transformers unroll a weight-shared block for a fixed or budgeted number of iterations~\citep{giannou2023looped, saunshi2025reasoning}, LoopFormer trains on trajectories of varying length so that the budget can be chosen at test time~\citep{jeddi2026loopformer}, and deep equilibrium models run the iteration to its fixed point~\citep{bai2019deep, marwah2023deep}. IRNO brings this structure to neural operators as deterministic residual corrections of a frozen prediction, analyzed as a fixed-point iteration~\citep{liu2026iterative}. Adaptive-computation methods choose the number of steps per input or per token~\citep{graves2016adaptive}, but every one of these methods reads out a single depth for the unit it adapts, whereas $\text{R}^2\text{NO}$ keeps every iterate as a candidate. Combining candidates with weights fitted on held-out data is stacking~\citep{wolpert1992stacked, breiman1996bagging}, and averaging checkpoints along a training trajectory is a snapshot ensemble~\citep{huang2017snapshot}. $\text{R}^2\text{NO}$ stacks the finetuned prediction and the iterates of several repair modules with one set of weights per Fourier cell rather than one for the whole field.

\paragraph{The sim-to-real gap.}
Closing the gap between simulation and the real world matters across engineering, because simulators idealize the systems they model while measurements carry noise, unmodeled effects, and conditions the simulation never saw. Robotics has addressed this gap from the simulation side, by randomizing simulator parameters so that a policy becomes robust to the mismatch~\citep{tobin2017domain, peng2018sim} or by calibrating the simulator against real rollouts~\citep{chebotar2019closing}, and from the model side, by learning a residual that corrects the simulator's predictions on real data~\citep{ajay2018augmenting, zeng2020tossingbot}. The latter idea is older in statistics, where a calibrated computer model is completed by an additive discrepancy term fitted to observations~\citep{kennedy2001bayesian}. $\text{R}^{2}\text{NO}$ pursues the same goal for scientific machine learning, where neural surrogates of physical systems are trained on simulation and deployed on measurements. Its repair module is a discrepancy model applied to the prediction of a finetuned operator, and its spectral ensemble decides where the discrepancy needs correcting and where the source prediction already transfers.

\section{Analysis of Repair and Component Selection}
\label{app:analysis}

This appendix collects the analysis behind Sections~\ref{sec:motivation} and~\ref{sec:method}: convergence of the repair loop versus accuracy of its iterates, dominance of the fitted ensemble on the fitting split, and generalization of a cell-wise selection. Throughout, $\|\cdot\|_{2}$ is the Euclidean norm of a vectorized field, $P_{b}$ are the projections of Eq.~\ref{eq:projector}, and the member index of Eq.~\ref{eq:repair} is dropped.

\subsection{Convergence of the repair loop and accuracy of its iterates}
\label{sec:repair-theory}

Fix an input history $\mathcal{X}$ with target $y$ and write the loop of Eq.~\ref{eq:repair} as $h_{\ell+1}=T(h_{\ell})$ with $T(h):=h+\alpha\,\Phi(\mathcal{X},h;\theta)$. A fixed point $h^{\star}=T(h^{\star})$ is a prediction at which the learned correction vanishes. Let $d:=\|T(y)-y\|_{2}=\alpha\|\Phi(\mathcal{X},y;\theta)\|_{2}$ be the correction the module would still apply at the target, the quantity that the fixed-point penalty of Eq.~\ref{eq:training} penalizes, and assume that $T$ is a contraction with factor $q<1$ on a ball $\overline{B}_{r}(y)$ of radius $r$ around the target that contains $h_{0}$, with $d\leq(1-q)r$. Neither pretraining nor the fixed-point penalty proves these assumptions; the viewpoint is classical and is also used for learned refinement by \citet{bai2019deep,marwah2023deep,liu2026iterative}.

\begin{proposition}[Local convergence and target bias]
\label{prop:repair-fixed-point}
Under these assumptions the iterates converge to the unique fixed point $h^{\star}$ of $T$ in $\overline{B}_{r}(y)$, and with $e_{\ell}:=\|h_{\ell}-y\|_{2}$,
\begin{equation}
e_{\ell}\leq q^{\ell}e_{0}+\frac{1-q^{\ell}}{1-q}\,d
\quad\text{for every }\ell\geq0,
\qquad
\frac{d}{1+q}\leq\|h^{\star}-y\|_{2}\leq\frac{d}{1-q}.
\label{eq:repair-convergence}
\end{equation}
\end{proposition}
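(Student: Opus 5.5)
The plan is to show that $T$ maps $\overline{B}_{r}(y)$ into itself and then apply the Banach fixed-point theorem on this closed ball; the two error bounds then follow from the triangle inequality.

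\emph{Invariance of the ball.} For $h\in\overline{B}_{r}(y)$, contraction on the ball (both $h$ and $y$ lie in it) gives
\[
\|T(h)-y\|_{2}\leq\|T(h)-T(y)\|_{2}+\|T(y)-y\|_{2}\leq q\|h-y\|_{2}+d\leq qr+(1-q)r=r.
\]
So $T$ is a self-map of the complete set $\overline{B}_{r}(y)$. Since $h_{0}$ lies in the ball, every iterate stays there, and Banach's theorem yields a unique fixed point $h^{\star}$ in the ball together with $h_{\ell}\to h^{\star}$. The only point needing care is this invariance; it is exactly why the hypothesis $d\leq(1-q)r$ is imposed.

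\emph{Iterate bound.} The same inequality, applied to $h=h_{\ell}$, gives the recursion $e_{\ell+1}\leq q\,e_{\ell}+d$. Unrolling it by induction gives
\[
e_{\ell}\leq q^{\ell}e_{0}+d\sum_{i=0}^{\ell-1}q^{i}=q^{\ell}e_{0}+\frac{1-q^{\ell}}{1-q}\,d,
\]
with the case $\ell=0$ holding trivially.

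\emph{Fixed-point bounds.} For the upper bound, write
\[
\|h^{\star}-y\|_{2}=\|T(h^{\star})-y\|_{2}\leq\|T(h^{\star})-T(y)\|_{2}+d\leq q\|h^{\star}-y\|_{2}+d,
\]
and rearrange using $q<1$ to get $\|h^{\star}-y\|_{2}\leq d/(1-q)$. Equivalently, let $\ell\to\infty$ in the iterate bound.

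For the lower bound, start from
\[
d=\|T(y)-y\|_{2}\leq\|T(y)-T(h^{\star})\|_{2}+\|h^{\star}-y\|_{2}\leq(1+q)\|h^{\star}-y\|_{2},
\]
which gives $\|h^{\star}-y\|_{2}\geq d/(1+q)$.

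No step is a real obstacle. The one subtlety is to use the contraction only between points of the ball, which is why $y$ and $h^{\star}$ are both verified to lie in $\overline{B}_{r}(y)$ before the contraction inequality is applied to them.
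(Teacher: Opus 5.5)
Your proposal is correct and follows essentially the same route as the paper's proof: invariance of $\overline{B}_{r}(y)$ from $\|T(h)-y\|_{2}\leq q\|h-y\|_{2}+d\leq r$, Banach's theorem, the unrolled recursion $e_{\ell+1}\leq qe_{\ell}+d$, and the lower bias bound from $d\leq\|T(y)-T(h^{\star})\|_{2}+\|h^{\star}-y\|_{2}$. For the upper bias bound the paper takes $\ell\to\infty$ in the iterate bound, and your direct rearrangement is an equivalent alternative.
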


\begin{proof}
For $h\in\overline{B}_{r}(y)$, $\|T(h)-y\|_{2}\leq q\|h-y\|_{2}+d\leq r$, so $T$ maps the ball into itself and Banach's theorem gives the fixed point. The same inequality gives $e_{\ell+1}\leq qe_{\ell}+d$, which iterates to the first bound and, as $\ell\to\infty$, to the upper bias bound. The lower bound follows from $d\leq\|T(y)-T(h^{\star})\|_{2}+\|h^{\star}-y\|_{2}\leq(1+q)\|h^{\star}-y\|_{2}$.
\end{proof}

The first term contracts the error of the source prediction and the second is the price of an imperfect correction at the target. When $d>0$, an extra step is guaranteed to help only while $e_{\ell}>d/(1-q)$, and if $h_{0}=y$ the first repair moves away from the target although the loop is contractive, which is why the source prediction enters the ensemble as its own column. Convergence also says nothing about where each region attains its best error. Take $y=(0,0)$, $h_{0}=(-1,-3)$, and $T(h)=\tfrac{1}{2}h+\tfrac{1}{2}(1,1)$, a $\tfrac{1}{2}$-contraction with $h^{\star}=(1,1)$, $h_{1}=(0,-1)$, and $h_{2}=(\tfrac{1}{2},0)$. With $P_{1}$ and $P_{2}$ the projections onto the two coordinates,
\begin{equation}
\|P_{1}h_{1}+P_{2}h_{2}-y\|_{2}^{2}=0<\tfrac{1}{4}=\min_{\ell\geq0}\|h_{\ell}-y\|_{2}^{2},
\label{eq:repair-example}
\end{equation}
so the region-wise composition is exact while no single iterate is, and the composed field is not a fixed point of $T$. This is what Eq.~\ref{eq:depth-composition-gap} measures on real data.

\subsection{Dominance of the fitted combination on the fitting split}
\label{app:composition-proof}

For any weight array $\tilde{w}=(\tilde{w}_{b,j})$, let $\widehat{Q}(\tilde{w})$ be the fitting error of Eq.~\ref{eq:combination-objective} with $\tilde{w}$ in place of the fitted weights, and $\widehat{Q}_{b}(\tilde{w}_{b})$ its term for cell $b$, so that $\widehat{Q}(\tilde{w})=\sum_{b}\widehat{Q}_{b}(\tilde{w}_{b})$.

\begin{proposition}[Dominance on the fitting split]
\label{prop:composition}
Let $w^{\star}$ minimize $\widehat{Q}$; the weights of Eq.~\ref{eq:ridge-solution} with $\lambda=0$ do so whenever every $\mathbf{G}_{b}$ is invertible. Then $\widehat{Q}(w^{\star})\leq\widehat{Q}(\tilde{w})$ for every $\tilde{w}$, in particular for the source prediction ($\tilde{w}=0$), any single candidate ($\tilde{w}_{b}=e_{j}$ for all $b$, with $e_{j}$ the one-hot vector selecting column $j$), and any cell-wise selection ($\tilde{w}_{b}=e_{j(b)}$).
\end{proposition}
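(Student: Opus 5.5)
The first inequality is immediate from the definition of $w^{\star}$ as a minimizer of $\widehat{Q}$ over all weight arrays. The real work is in two places: showing that the $\lambda=0$ solution of Eq.~\ref{eq:ridge-solution} is such a minimizer, and showing that the named comparators are admissible weight arrays whose fitting errors equal those of the corresponding predictors. I would go through these in that order.

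\textbf{Step 1: the objective decouples.} By Eq.~\ref{eq:combination-objective}, which rests on Eq.~\ref{eq:parseval} and the mutual orthogonality of the $P_{b}$, we have $\widehat{Q}(\tilde{w})=\sum_{b}\widehat{Q}_{b}(\tilde{w}_{b})$. Each summand depends only on the block $\tilde{w}_{b}$. The argument needs $P_{b}P_{b'}=\delta_{bb'}P_{b}$ and $\sum_{b}P_{b}=\mathrm{id}$ on the measured channels, the latter because the cells cover every coefficient. It also needs $h_{0}$ to be left unchanged on the unmeasured channels, which are excluded from the error. Because the blocks are independent, a joint minimizer is exactly a collection of per-cell minimizers, and $\min\widehat{Q}=\sum_{b}\min\widehat{Q}_{b}$.

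\textbf{Step 2: the closed form attains each cell's minimum.} Expanding the square gives
\[
\widehat{Q}_{b}(w_{b})=\widehat{\mathbb{E}}_{\mathrm{fit}}\|P_{b}(y-h_{0})\|_{2}^{2}-2\,w_{b}^{\top}\mathbf{r}_{b}+w_{b}^{\top}\mathbf{G}_{b}w_{b},
\]
with $\mathbf{G}_{b},\mathbf{r}_{b}$ as in Eq.~\ref{eq:normal-equations}. $\mathbf{G}_{b}$ is an averaged Gram matrix, hence positive semidefinite, so $\widehat{Q}_{b}$ is a convex quadratic. Its minimizers are the solutions of $\mathbf{G}_{b}w_{b}=\mathbf{r}_{b}$. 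When $\mathbf{G}_{b}$ is invertible, this solution is unique and equals Eq.~\ref{eq:ridge-solution} at $\lambda=0$. A minimizer exists even without invertibility, because $\mathbf{r}_{b}$ lies in the range of $\mathbf{G}_{b}$ (both are built from the same projected columns), so $w^{\star}$ is well defined in general. Comparing with an arbitrary $\tilde{w}$ cell by cell and summing then gives $\widehat{Q}(w^{\star})\leq\widehat{Q}(\tilde{w})$.

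\textbf{Step 3: the special cases.}
\begin{itemize}
\item \emph{Source prediction.} Setting $\tilde{w}=0$ in Eq.~\ref{eq:spectral-output} returns $h_{0}$ exactly.
\item \emph{Single candidate $h_{\ell}^{(m)}$.} Take $\tilde{w}_{b}=e_{j}$ for all $b$, where $c_{j}=h_{\ell}^{(m)}-h_{0}$. Then $\sum_{b}P_{b}c_{j}=c_{j}$ on the measured channels, so the output is $h_{\ell}^{(m)}$ there.
\item \emph{Base column.} Selecting $c_{j}=-h_{0}$ gives the zero field.
\item \emph{Cell-wise selection.} Take $\tilde{w}_{b}=e_{j(b)}$. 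This yields $h_{0}+\sum_{b}P_{b}c_{j(b)}$, which is the composite field of Eq.~\ref{eq:depth-composition-gap}.
\item \emph{Uniform averages (used in the main text).} Put equal weights on the relevant columns, adjusting the coefficient of $h_{0}$ through the column differences as needed.
\end{itemize}
In each case the fitting error of the predictor equals $\widehat{Q}(\tilde{w})$, and the bound follows.

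\textbf{Main obstacle.} I expect the only delicate point to be the bookkeeping behind exact decoupling: $h_{0}$ must not leak between cells, and the measured-channel restriction must hold. This follows because $y-h_{0}$ restricted to the measured channels equals $\sum_{b}P_{b}(y-h_{0})$.
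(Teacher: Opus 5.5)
Your proposal is correct and follows the same route as the paper: the fitting error splits into independent per-cell terms $\widehat{Q}_{b}(\tilde{w}_{b})$, each is minimized by the normal equations of Eq.~\ref{eq:normal-equations}, and every named comparator is simply one feasible weight array. Your additions fill in steps the paper leaves implicit: the convex-quadratic expansion, existence of a minimizer via $\mathbf{r}_{b}\in\operatorname{range}\mathbf{G}_{b}$, and the check that each comparator reproduces the intended predictor on the measured channels.
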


The proposition holds because each $\widehat{Q}_{b}$ depends only on $\tilde{w}_{b}$, so $\widehat{Q}$ is minimized by minimizing every cell separately, which is what the normal equations of Eq.~\ref{eq:normal-equations} do, while any other $\tilde{w}$ is merely one feasible choice in every cell.
For $M=1$ and $\mathcal{S}=\{1,\dots,L\}$, a cell-wise selection is the composite field $\sum_{b}P_{b}h_{\ell_{b}}$ of Section~\ref{sec:motivation}, so the common-depth gap of Eq.~\ref{eq:depth-composition-gap} is the one-hot special case of the proposition. Since the source prediction is that of the full finetuning baseline, the case $\tilde{w}=0$ shows that the ensemble cannot fit worse than that baseline. 
The argument covers the squared error of Eq.~\ref{eq:combination-objective}, but not the relative error or frequency-bin aggregates of the benchmark, and the ridge term gives up part of the inequality for stability.

\subsection{Generalization of a cell-wise selection}
\label{app:selection-generalization}

Proposition~\ref{prop:composition} concerns the fitting split. For $M=1$ and $\mathcal{S}=\{1,\dots,L\}$, let the fitting split consist of $n$ real trajectories, treated as the independent units because windows within a trajectory overlap, and let $\widehat{E}^{\mathrm{fit}}_{b}(\ell)$ be the regional error of Eq.~\ref{eq:diagnostic-region-error} on the fitting split, an average over trajectories of a per-trajectory error in $[0,C_{b}]$ whose expectation over the real distribution is $E_{b}(\ell)$. The repair module is trained on $\mathcal{D}^{\mathrm{train}}$, and one of $N_{\mathrm{ckpt}}$ saved checkpoints is selected on the fitting split ($N_{\mathrm{ckpt}}=12$ in our experiments), so the candidates of every checkpoint are independent of the fitting split. Hoeffding's inequality and a union bound over the $N_{\Omega}(L+1)N_{\mathrm{ckpt}}$ combinations of region, depth, and checkpoint give, with probability at least $1-\delta$ and simultaneously for every saved checkpoint, hence for the one selected on the fitting split,
\begin{equation}
\bigl|E_{b}(\ell)-\widehat{E}^{\mathrm{fit}}_{b}(\ell)\bigr|\leq\xi_{b}:=C_{b}\sqrt{\frac{\log\bigl(2N_{\Omega}(L+1)N_{\mathrm{ckpt}}/\delta\bigr)}{2n}}
\qquad\text{for all } b \text{ and } \ell\in\mathcal{S}_{0}.
\label{eq:uniform-selection}
\end{equation}
On this event a minimizer $\ell_{b}$ of $\widehat{E}^{\mathrm{fit}}_{b}$ satisfies $E_{b}(\ell_{b})\leq\min_{\ell}E_{b}(\ell)+2\xi_{b}$, and since Eq.~\ref{eq:parseval} makes the risk additive across regions, the composite field $\sum_{b}P_{b}h_{\ell_{b}}$ has population risk at most $\sum_{b}\min_{\ell\in\mathcal{S}_{0}}E_{b}(\ell)+2\sum_{b}\xi_{b}$. Region-wise selection thus pays $2\xi_{b}$ per region for choosing on data, a price that grows only logarithmically in the number of regions, depths, and checkpoints. Any other choice made on the fitting split, such as the ridge parameter, must be covered by the union bound or made on a separate split, and the fitted weights of Eq.~\ref{eq:ridge-solution}, which range over a continuous class, would need a uniform bound that we do not pursue.

\section{Experimental Setup}
\label{app:setup}
This appendix gives the details of the experiments in Section~\ref{sec:experiments}, including the setup of the diagnostic in Figure~\ref{fig:dist_fourier} and the data, models, training, fitting, and hardware settings behind Tables~\ref{tab:main} and~\ref{tab:ablation}.
\subsection{Frequency-resolved diagnostic of Figure~\ref{fig:dist_fourier}}
\label{app:diagnostic-setup}

\paragraph{Model and data.} The diagnostic uses the Cylinder system of RealPDEBench, with the finetuned FNO checkpoint of Section~\ref{sec:experiments} as the frozen source operator $\Psi(\cdot;\pi)$. The refinement module $\Phi(\cdot;\theta)$ is trained on $\mathcal{D}^{\mathrm{train}}$ with the objective of Eq.~\ref{eq:training} and the architecture, step size, and optimization settings of the IRNO baseline. After training, $\pi$ and $\theta$ are held fixed and Eq.~\ref{eq:irno-trajectory} is run for $L=12$ steps on every input history of the test split $\mathcal{D}^{\mathrm{test}}$, which yields the iterates $h_{0},\ldots,h_{L}$. No hyperparameter or design choice of Section~\ref{sec:method} is selected on this split.

\paragraph{Metric.} Let $\lvert\Omega_{b}\rvert$ denote the number of coefficients retained by the mask of region $b$ across measured channels and forecast frames. The fRMSE of region $b$ at depth $\ell$ and its percentage change relative to depth zero, which Figure~\ref{fig:dist_fourier}(c) reports, are
\begin{align}
\operatorname{fRMSE}_{b}(\ell)&=\Bigl(\widehat{\mathbb{E}}_{\mathrm{test}}\,\tfrac{1}{\lvert\Omega_{b}\rvert}\bigl\|\mathbf{1}_{\Omega_{b}}\odot\mathcal{F}(h_{\ell}-y)\bigr\|_{2}^{2}\Bigr)^{1/2},
\label{eq:diagnostic-frmse}\\
\Delta\operatorname{fRMSE}_{b}(\ell)&=100\,\frac{\operatorname{fRMSE}_{b}(\ell)-\operatorname{fRMSE}_{b}(0)}{\operatorname{fRMSE}_{b}(0)}.
\label{eq:diagnostic-frmse-change}
\end{align}
Negative values indicate improvement. Because $\mathcal{F}$ is unitary, the regional squared error of Eq.~\ref{eq:diagnostic-region-error} satisfies $\widehat{E}_{b}(\ell)=\lvert\Omega_{b}\rvert\operatorname{fRMSE}_{b}(\ell)^{2}$, so the markers in Figure~\ref{fig:dist_fourier}(c), which mark the minimum over $\ell\in\mathcal{S}_{0}$, are also the region-wise minimizers in Eq.~\ref{eq:depth-composition-gap}.

\subsection{Training and fitting details}
\label{app:training-details}

\paragraph{Data and splits.} We use the real-world finetuning task of RealPDEBench for all five systems. The training split of the benchmark serves as $\mathcal{D}^{\mathrm{train}}$ for the repair modules, its validation split serves in full as the fitting split $\mathcal{D}^{\mathrm{fit}}$ of the spectral ensemble, and its test split is used only to compute the reported errors. Each output window contains $20$ forecast frames, except for Controlled Cylinder with $10$, on a grid of $64\times128$ points for Cylinder, Controlled Cylinder, and Foil and $64\times64$ points for FSI and Combustion. The measured channels are read from the benchmark data and are two of the three reported channels for the fluid systems and one of the sixteen reported channels for Combustion.

\paragraph{Source operators.} For U-Net, DeepONet, FNO, Transolver, and DPOT-S, the source operator is the finetuned checkpoint released with RealPDEBench. For CNO we finetune the simulation-pretrained backbone ourselves under the benchmark protocol, because the released checkpoint does not work. The source operator is frozen after finetuning, and its prediction $h_{0}$ is computed once per window and shared by every repair module.

\paragraph{Repair modules.} The repair module $\Phi$ is a U-Net with base width $32$ that receives the input history and the current iterate, concatenated over all forecast frames along the channel axis, and returns an update with the shape of the output field, which amounts to about $8.7$ million parameters for Cylinder. It is trained for $12$ epochs with learning rate $3\times10^{-4}$ and step size $\alpha=0.2$, supervising $12$ iterates of the loop with the objective of Eq.~\ref{eq:training}. The spatial term is the mean squared error over the measured channels, the spectral term compares the magnitudes of the real-input Fourier coefficients normalized by the number of grid points with the exponent $\eta_{\ell}$ increasing linearly from one to two, and the fixed-point term is evaluated once per batch at the target, with weights $\beta_{\mathrm{spe}}=1.0$ and $\beta_{\mathrm{fp}}=0.01$. The checkpoint is selected among the twelve saved epochs by the error on the validation split, which is also the fitting split of the ensemble, and Appendix~\ref{app:selection-generalization} accounts for this dependence.

\paragraph{Iterative refinement baseline.} IRNO uses the repair module trained with a fixed seed of $42$ and is reported at depth $L=12$ and, in Table~\ref{tab:main}, at the depth $\ell^{\star}$ selected on the fitting split, so it shares the trained trajectory with $\text{R}^{2}\text{NO}$ with $M=1$ and differs from it only in the readout. The loss terms, their weights, the step size, and the learning rate follow the published configuration of the baseline. Its number of epochs, base width, and padding differ from that configuration because the benchmark differs, and these differences, together with the normalization of the Fourier transform in the spectral term and the absence of a warmup on the fixed-point term, apply identically to IRNO and to $\text{R}^{2}\text{NO}$.

\paragraph{Spectral ensemble.} The Fourier transform is applied over the two spatial axes of the denormalized output field, and the cells of Eq.~\ref{eq:cell-index} use $N_{\rho}=128$ radial bands and $N_{\varphi}=16$ angular sectors, shared across forecast frames. Because the partition is finer than the Fourier grid near the origin, $1{,}468$ of the $2{,}048$ requested spatial cells are occupied on the $64\times128$ grid and $1{,}182$ on the $64\times64$ grid, and empty cells receive zero weight. With all twelve depths retained, the ensemble has $J=13$ columns for $M=1$ and $J=37$ for $M=3$, which gives $151{,}552$ weights for Cylinder with $M=3$. The Gram matrices and right-hand sides of Eq.~\ref{eq:normal-equations} are accumulated in double precision over every window of the fitting split, each window entering with weight $\|y\|_{2}^{-1}$, a fixed reweighting of Eq.~\ref{eq:combination-objective} that lies between the unweighted squared error and the relative error and leaves Proposition~\ref{prop:composition} unchanged. The ridge parameter is selected from $\{10^{-8},10^{-6},10^{-4},10^{-2},10^{-1},1\}$ by solving on one half of the fitting split and scoring on the other half, after which the system is solved again on the full split. Each cell is fitted from many more observations than it has weights, since for Cylinder the fitting split contains $4{,}820$ windows and even a cell with a single Fourier coefficient accumulates $96{,}400$ observations for its $37$ weights.

\paragraph{Computational cost.} Training a repair module and fitting the ensemble are the only adaptation steps, and the latter is a closed-form solve that requires no gradient updates. At inference, $\text{R}^{2}\text{NO}$ with $M=1$ evaluates $\Phi$ twelve times, exactly as IRNO does, and $\text{R}^{2}\text{NO}$ with $M=3$ evaluates it thirty-six times. The fitted weights are expanded once into a tensor over Fourier coefficients, so the ensemble adds only one forward and one inverse transform per window to the cost of the rollout. Experiments run on a server with four NVIDIA RTX A6000 GPUs of 48 GB each, two AMD EPYC 9224 processors with 48 cores in total, and 377 GB of memory, using PyTorch 2.7.0 with CUDA 12.6 in single precision, while the Gram matrices of the spectral ensemble are accumulated in double precision. We use version 0.1.0 of the RealPDEBench code at commit \texttt{62f4c80} and version 2.0.0 of its data.

\section{Additional results}
\label{app:additional-tables}

Table~\ref{tab:rel-l2} reports the relative $L_{2}$ error for every setting of Table~\ref{tab:main}. $\text{R}^{2}\text{NO}$ with $M=3$ improves on full finetuning in twenty-nine of the thirty pairs and $\text{R}^{2}\text{NO}$ with $M=1$ in twenty-eight, compared with twenty-two for IRNO. At the matched budget of twelve evaluations of $\Phi$, $\text{R}^{2}\text{NO}$ with $M=1$ improves on IRNO in twenty-six pairs, and the four exceptions all lie on Cylinder, where IRNO attains the lowest relative $L_{2}$ error for U-Net, CNO, and DPOT-S and full finetuning does so for FNO. This metric normalizes each window by the norm of its target and therefore weights windows differently from the squared error that the combination minimizes on the fitting split. On Cylinder a small fraction of windows carries most of the squared error, so a combination fitted to that error need not be optimal under the relative normalization, while on the remaining four systems $\text{R}^{2}\text{NO}$ with $M=3$ attains the lowest relative $L_{2}$ error for every backbone.

\begin{table}[h]
\centering
\caption{Relative $L_{2}$ error ($\times10^{-2}$, lower is better) on RealPDEBench for the settings of Table~\ref{tab:main}. The repair budget counts evaluations of $\Phi$, and the best value of each column within a backbone is set in bold.}
\label{tab:rel-l2}
\small
\setlength{\tabcolsep}{4pt}
\renewcommand{\arraystretch}{1.04}
\begin{tabular}{@{}llc*{5}{c}@{}}
\toprule
Backbone & Adaptation & \shortstack{Repair\\budget} & Cylinder & \shortstack{Controlled\\Cylinder} & FSI & Foil & Combustion \\
\midrule
\multirow{4}{*}{U-Net}
    & Full finetuning & 0 & 7.284 & 5.435 & 5.789 & 1.447 & 54.025 \\
    & IRNO & 12 & \textbf{6.926} & 5.421 & 6.053 & 1.454 & 55.257 \\
    & $\text{R}^{2}\text{NO}$ ($M{=}1$) & 12 & 7.652 & 5.317 & 5.753 & 1.414 & 53.840 \\
    & $\text{R}^{2}\text{NO}$ ($M{=}3$) & 36 & 7.103 & \textbf{5.314} & \textbf{5.751} & \textbf{1.412} & \textbf{53.789} \\
\midrule
\multirow{4}{*}{DeepONet}
    & Full finetuning & 0 & 15.031 & 22.837 & 23.675 & 3.751 & 57.215 \\
    & IRNO & 12 & 8.468 & 6.678 & 7.640 & 2.869 & 55.707 \\
    & $\text{R}^{2}\text{NO}$ ($M{=}1$) & 12 & 8.395 & 6.041 & 6.772 & 2.436 & 55.147 \\
    & $\text{R}^{2}\text{NO}$ ($M{=}3$) & 36 & \textbf{7.745} & \textbf{5.848} & \textbf{6.429} & \textbf{2.122} & \textbf{54.267} \\
\midrule
\multirow{4}{*}{CNO}
    & Full finetuning & 0 & 8.298 & 5.672 & 6.788 & 2.055 & 58.435 \\
    & IRNO & 12 & \textbf{7.060} & 5.517 & 6.054 & 2.115 & 56.517 \\
    & $\text{R}^{2}\text{NO}$ ($M{=}1$) & 12 & 7.384 & 5.490 & 6.032 & 1.719 & 55.716 \\
    & $\text{R}^{2}\text{NO}$ ($M{=}3$) & 36 & 7.063 & \textbf{5.482} & \textbf{5.972} & \textbf{1.660} & \textbf{54.963} \\
\midrule
\multirow{4}{*}{FNO}
    & Full finetuning & 0 & \textbf{7.804} & 7.025 & 8.807 & 2.056 & 56.793 \\
    & IRNO & 12 & 7.940 & 5.784 & 6.927 & 2.159 & 57.258 \\
    & $\text{R}^{2}\text{NO}$ ($M{=}1$) & 12 & 7.994 & 5.745 & 6.786 & 1.767 & 55.579 \\
    & $\text{R}^{2}\text{NO}$ ($M{=}3$) & 36 & 7.839 & \textbf{5.673} & \textbf{6.584} & \textbf{1.734} & \textbf{55.387} \\
\midrule
\multirow{4}{*}{Transolver}
    & Full finetuning & 0 & 17.063 & 11.653 & 14.900 & 3.352 & 78.024 \\
    & IRNO & 12 & 8.791 & 6.221 & 7.120 & 2.101 & 64.365 \\
    & $\text{R}^{2}\text{NO}$ ($M{=}1$) & 12 & 7.992 & 5.959 & 6.758 & 1.867 & 56.295 \\
    & $\text{R}^{2}\text{NO}$ ($M{=}3$) & 36 & \textbf{7.403} & \textbf{5.769} & \textbf{6.650} & \textbf{1.839} & \textbf{55.381} \\
\midrule
\multirow{4}{*}{DPOT-S}
    & Full finetuning & 0 & 7.660 & 6.151 & 7.007 & 1.733 & 53.786 \\
    & IRNO & 12 & \textbf{6.919} & 5.599 & 6.043 & 1.684 & 55.378 \\
    & $\text{R}^{2}\text{NO}$ ($M{=}1$) & 12 & 7.379 & 5.533 & 6.019 & 1.634 & 53.152 \\
    & $\text{R}^{2}\text{NO}$ ($M{=}3$) & 36 & 6.941 & \textbf{5.470} & \textbf{5.984} & \textbf{1.623} & \textbf{53.028} \\
\bottomrule
\end{tabular}
\end{table}

\section{Additional ablations}
\label{app:additional-ablations}

\paragraph{Axes of the partition.} Table~\ref{tab:ablation-axes} removes one axis of the partition at a time while keeping the fitted combination, the three repair modules, and the repair budget of the operating point. Summed over the fifteen pairs, the RMSE rises from $0.2323$ for the full partition to $0.2408$ without the angular axis, $0.2478$ without the radial axis, and $0.2446$ without the channel axis, against $0.2812$ for a single cell, so each axis recovers part of the gap and none of them accounts for it alone. The fRMSE shows the same ordering more sharply, rising from $0.02794$ to $0.02917$, $0.03150$, and $0.03105$ respectively, against $0.03682$ for a single cell, so the radial axis matters most for the frequency-domain error, consistent with the dependence of the best depth on frequency magnitude in Section~\ref{sec:motivation}. The effect of the individual axes is concentrated on Cylinder and lies at or below the reported precision on the remaining systems. Removing the channel axis leaves Combustion unchanged, because that system has a single measured channel and the corresponding rows are identical by construction.

\paragraph{Base column.} Table~\ref{tab:ablation-base} removes the base column $-h_{0}$ of Eq.~\ref{eq:columns}. Without it, the prediction in each cell is an affine combination of the source prediction and the retained iterates, and it still reproduces the source prediction when every weight vanishes, because Eq.~\ref{eq:spectral-output} expresses the output as a correction to that prediction. The base column lifts the constraint that the coefficients sum to one and thereby lets a cell shrink its overall amplitude, which reduces the squared error in cells where every candidate carries more energy than the target. Removing it raises the summed RMSE over the fifteen pairs from $0.2323$ to $0.2378$, and at full precision the RMSE increases in every pair, with the largest change of $10.9$ percent for DPOT-S on Cylinder. The summed fRMSE rises from $0.02794$ to $0.02818$, although the column slightly increases the fRMSE in five pairs, most visibly by $1.8$ percent for DPOT-S on Foil, since it is fitted to the squared error on the fitting split and need not reduce the frequency-domain error in every pair. The combination without the column still improves on full finetuning in every pair, which confirms that retention of the source prediction follows from the residual form of the output rather than from the column.

\begin{table}[h]
\centering
\caption{Removing one axis of the partition at a matched repair budget. All rows use three repair modules rolled out to $L=12$ and fitted weights, and differ only in the partition. The RMSE and the fRMSE are reported, lower being better, and the best value of each column within a backbone is set in bold, with ties at the reported precision marked jointly.}
\label{tab:ablation-axes}
\small
\setlength{\tabcolsep}{2.5pt}
\renewcommand{\arraystretch}{1.04}
\resizebox{\columnwidth}{!}{%
\begin{tabular}{@{}ll*{10}{c}@{}}
\toprule
& & \multicolumn{2}{c}{Cylinder}
    & \multicolumn{2}{c}{\shortstack{Controlled\\Cylinder}}
    & \multicolumn{2}{c}{FSI}
    & \multicolumn{2}{c}{Foil}
    & \multicolumn{2}{c}{Combustion} \\
\cmidrule(lr){3-4}\cmidrule(lr){5-6}\cmidrule(lr){7-8}
\cmidrule(lr){9-10}\cmidrule(lr){11-12}
Backbone & Partition
    & \shortstack{RMSE\\$(\times10^{-2})$} & \shortstack{fRMSE\\$(\times10^{-3})$}
    & \shortstack{RMSE\\$(\times10^{-2})$} & \shortstack{fRMSE\\$(\times10^{-3})$}
    & \shortstack{RMSE\\$(\times10^{-2})$} & \shortstack{fRMSE\\$(\times10^{-3})$}
    & \shortstack{RMSE\\$(\times10^{-2})$} & \shortstack{fRMSE\\$(\times10^{-3})$}
    & \shortstack{RMSE\\$(\times10^{-2})$} & \shortstack{fRMSE\\$(\times10^{-3})$} \\
\midrule
\multirow{5}{*}{U-Net}
    & Single cell & 5.213 & 8.110 & 0.778 & 0.912 & 0.837 & 0.699 & 0.921 & 0.818 & 2.131 & 2.497 \\
    & No radial axis & 3.606 & 5.709 & \textbf{0.775} & 0.905 & \textbf{0.836} & \textbf{0.697} & 0.908 & 0.794 & 2.121 & 2.468 \\
    & No angular axis & 3.154 & 4.473 & 0.776 & \textbf{0.904} & \textbf{0.836} & \textbf{0.697} & 0.910 & 0.793 & 2.122 & 2.468 \\
    & No channel axis & 3.412 & 5.557 & 0.776 & 0.908 & \textbf{0.836} & \textbf{0.697} & 0.910 & 0.798 & \textbf{2.119} & \textbf{2.462} \\
    & Full partition & \textbf{2.775} & \textbf{3.912} & 0.776 & 0.905 & \textbf{0.836} & 0.698 & \textbf{0.905} & \textbf{0.784} & \textbf{2.119} & \textbf{2.462} \\
\midrule
\multirow{5}{*}{FNO}
    & Single cell & 4.463 & 6.925 & 0.808 & 0.946 & 1.010 & 0.971 & 1.120 & 1.100 & 2.235 & 2.715 \\
    & No radial axis & 3.810 & 5.743 & 0.805 & 0.937 & 0.986 & 0.928 & 1.072 & 1.040 & 2.204 & 2.665 \\
    & No angular axis & 3.682 & 5.260 & 0.805 & \textbf{0.934} & 0.989 & 0.929 & 1.081 & 1.045 & 2.206 & 2.663 \\
    & No channel axis & 3.707 & 5.537 & 0.805 & 0.937 & 0.991 & 0.939 & 1.076 & 1.048 & \textbf{2.201} & \textbf{2.659} \\
    & Full partition & \textbf{3.432} & \textbf{4.919} & \textbf{0.804} & \textbf{0.934} & \textbf{0.985} & \textbf{0.927} & \textbf{1.061} & \textbf{1.019} & \textbf{2.201} & \textbf{2.659} \\
\midrule
\multirow{5}{*}{DPOT-S}
    & Single cell & 3.829 & 5.990 & 0.793 & 0.931 & 0.864 & 0.729 & 1.048 & 1.139 & 2.071 & 2.333 \\
    & No radial axis & 2.931 & 4.536 & \textbf{0.789} & 0.917 & \textbf{0.860} & \textbf{0.726} & 1.022 & 1.111 & 2.059 & 2.320 \\
    & No angular axis & 2.792 & 3.933 & \textbf{0.789} & \textbf{0.912} & 0.861 & \textbf{0.726} & 1.020 & 1.110 & 2.060 & 2.319 \\
    & No channel axis & 2.891 & 4.413 & \textbf{0.789} & 0.913 & 0.861 & \textbf{0.726} & 1.028 & 1.133 & \textbf{2.058} & \textbf{2.318} \\
    & Full partition & \textbf{2.617} & \textbf{3.665} & \textbf{0.789} & 0.914 & \textbf{0.860} & \textbf{0.726} & \textbf{1.016} & \textbf{1.102} & \textbf{2.058} & \textbf{2.318} \\
\bottomrule
\end{tabular}}
\end{table}

\begin{table}[h]
\centering
\caption{Effect of the base column at the operating point. Both rows use three repair modules rolled out to $L=12$, fitted weights, and the full partition. The RMSE and the fRMSE are reported, lower being better, and the better value of each pair is set in bold, with ties at the reported precision marked jointly.}\label{tab:ablation-base}
\small
\setlength{\tabcolsep}{2.5pt}
\renewcommand{\arraystretch}{1.04}
\resizebox{\columnwidth}{!}{%
\begin{tabular}{@{}ll*{10}{c}@{}}
\toprule
& & \multicolumn{2}{c}{Cylinder}
    & \multicolumn{2}{c}{\shortstack{Controlled\\Cylinder}}
    & \multicolumn{2}{c}{FSI}
    & \multicolumn{2}{c}{Foil}
    & \multicolumn{2}{c}{Combustion} \\
\cmidrule(lr){3-4}\cmidrule(lr){5-6}\cmidrule(lr){7-8}
\cmidrule(lr){9-10}\cmidrule(lr){11-12}
Backbone & Base column
    & \shortstack{RMSE\\$(\times10^{-2})$} & \shortstack{fRMSE\\$(\times10^{-3})$}
    & \shortstack{RMSE\\$(\times10^{-2})$} & \shortstack{fRMSE\\$(\times10^{-3})$}
    & \shortstack{RMSE\\$(\times10^{-2})$} & \shortstack{fRMSE\\$(\times10^{-3})$}
    & \shortstack{RMSE\\$(\times10^{-2})$} & \shortstack{fRMSE\\$(\times10^{-3})$}
    & \shortstack{RMSE\\$(\times10^{-2})$} & \shortstack{fRMSE\\$(\times10^{-3})$} \\
\midrule
\multirow{2}{*}{U-Net}
    & Without & 2.805 & 3.953 & \textbf{0.776} & \textbf{0.904} & \textbf{0.836} & \textbf{0.697} & 0.911 & 0.788 & 2.128 & 2.480 \\
    & With & \textbf{2.775} & \textbf{3.912} & \textbf{0.776} & 0.905 & \textbf{0.836} & 0.698 & \textbf{0.905} & \textbf{0.784} & \textbf{2.119} & \textbf{2.462} \\
\midrule
\multirow{2}{*}{FNO}
    & Without & 3.534 & 4.984 & 0.806 & \textbf{0.934} & 0.988 & \textbf{0.927} & 1.091 & 1.021 & 2.222 & 2.678 \\
    & With & \textbf{3.432} & \textbf{4.919} & \textbf{0.804} & \textbf{0.934} & \textbf{0.985} & \textbf{0.927} & \textbf{1.061} & \textbf{1.019} & \textbf{2.201} & \textbf{2.659} \\
\midrule
\multirow{2}{*}{DPOT-S}
    & Without & 2.938 & 3.765 & 0.790 & \textbf{0.914} & 0.862 & \textbf{0.726} & 1.024 & \textbf{1.083} & 2.068 & 2.323 \\
    & With & \textbf{2.617} & \textbf{3.665} & \textbf{0.789} & \textbf{0.914} & \textbf{0.860} & \textbf{0.726} & \textbf{1.016} & 1.102 & \textbf{2.058} & \textbf{2.318} \\
\bottomrule
\end{tabular}}
\end{table}

\clearpage
\section{Adaptation from the Simulation-Pretrained Checkpoint}
\label{app:pretrained-source}

We also evaluate the setting in which the backbone is never finetuned: the simulation-pretrained operator is frozen and provides the source prediction $h_{0}$, and IRNO and $\text{R}^{2}\text{NO}$ are trained and fitted on top of it with the settings of Appendix~\ref{app:training-details}. Table~\ref{tab:pretrained-source} reports the results for FNO and DPOT-S, with full finetuning shown for reference only; its prediction is not a candidate of the ensemble.

\begin{table}[ht!]
\centering
\caption{Adaptation from the frozen simulation-pretrained checkpoint. A dagger marks a value lower than full finetuning of the same backbone. Lower is better, and the best value of each column within a backbone is set in bold.}
\label{tab:pretrained-source}
\small
\setlength{\tabcolsep}{2.5pt}
\renewcommand{\arraystretch}{1.04}
\resizebox{\columnwidth}{!}{%
\begin{tabular}{@{}ll*{10}{c}@{}}
\toprule
& & \multicolumn{2}{c}{Cylinder}
    & \multicolumn{2}{c}{\shortstack{Controlled\\Cylinder}}
    & \multicolumn{2}{c}{FSI}
    & \multicolumn{2}{c}{Foil}
    & \multicolumn{2}{c}{Combustion} \\
\cmidrule(lr){3-4}\cmidrule(lr){5-6}\cmidrule(lr){7-8}
\cmidrule(lr){9-10}\cmidrule(lr){11-12}
Backbone & Adaptation
    & \shortstack{RMSE\\$(\times10^{-2})$} & \shortstack{fRMSE\\$(\times10^{-3})$}
    & \shortstack{RMSE\\$(\times10^{-2})$} & \shortstack{fRMSE\\$(\times10^{-3})$}
    & \shortstack{RMSE\\$(\times10^{-2})$} & \shortstack{fRMSE\\$(\times10^{-3})$}
    & \shortstack{RMSE\\$(\times10^{-2})$} & \shortstack{fRMSE\\$(\times10^{-3})$}
    & \shortstack{RMSE\\$(\times10^{-2})$} & \shortstack{fRMSE\\$(\times10^{-3})$} \\
\midrule
\multirow{5}{*}{FNO}
    & Pretrained (source) & 7.389 & 11.481 & 2.845 & 5.934 & 4.263 & 5.899 & 2.738 & 3.733 & 3.630 & 5.147 \\
    & Full finetuning & 5.449 & 8.661 & 0.943 & 1.077 & 1.266 & 1.162 & 1.199 & 1.241 & \textbf{2.246} & \textbf{2.726} \\
    & IRNO ($\ell{=}12$) & 3.194$^{\dagger}$ & 4.575$^{\dagger}$ & 0.890$^{\dagger}$ & 1.124 & 0.981$^{\dagger}$ & 0.927$^{\dagger}$ & 1.546 & 1.794 & 2.654 & 3.482 \\
    & $\text{R}^{2}\text{NO}$ ($M{=}1$) & 2.787$^{\dagger}$ & 3.545$^{\dagger}$ & 0.873$^{\dagger}$ & 1.053$^{\dagger}$ & 0.935$^{\dagger}$ & 0.818$^{\dagger}$ & 1.177$^{\dagger}$ & 1.309 & 2.521 & 3.301 \\
    & $\text{R}^{2}\text{NO}$ ($M{=}3$) & \textbf{2.456}$^{\dagger}$ & \textbf{2.981}$^{\dagger}$ & \textbf{0.843}$^{\dagger}$ & \textbf{0.997}$^{\dagger}$ & \textbf{0.912}$^{\dagger}$ & \textbf{0.773}$^{\dagger}$ & \textbf{1.114}$^{\dagger}$ & \textbf{1.151}$^{\dagger}$ & 2.347 & 2.932 \\
\midrule
\multirow{5}{*}{DPOT-S}
    & Pretrained (source) & 5.124 & 7.595 & 2.333 & 3.939 & 2.594 & 3.054 & 2.210 & 2.712 & 3.932 & 5.418 \\
    & Full finetuning & 4.404 & 6.726 & 0.847 & 0.987 & 0.990 & \textbf{0.737} & 1.093 & 1.178 & \textbf{2.106} & \textbf{2.408} \\
    & IRNO ($\ell{=}12$) & 3.266$^{\dagger}$ & 4.260$^{\dagger}$ & 0.909 & 1.229 & 1.087 & 1.063 & 1.361 & 1.571 & 2.664 & 3.530 \\
    & $\text{R}^{2}\text{NO}$ ($M{=}1$) & 2.796$^{\dagger}$ & 3.430$^{\dagger}$ & 0.839$^{\dagger}$ & 1.039 & 1.013 & 0.913 & 1.079$^{\dagger}$ & 1.125$^{\dagger}$ & 2.490 & 3.250 \\
    & $\text{R}^{2}\text{NO}$ ($M{=}3$) & \textbf{2.258}$^{\dagger}$ & \textbf{2.544}$^{\dagger}$ & \textbf{0.814}$^{\dagger}$ & \textbf{0.964}$^{\dagger}$ & \textbf{0.951}$^{\dagger}$ & 0.806 & \textbf{1.022}$^{\dagger}$ & \textbf{0.986}$^{\dagger}$ & 2.446 & 3.161 \\
\bottomrule
\end{tabular}}
\end{table}

All adapted methods improve on the source prediction in every pair, as Proposition~\ref{prop:composition} guarantees for $\text{R}^{2}\text{NO}$ on the fitting split. At the same repair budget, $\text{R}^{2}\text{NO}$ with $M=1$ improves on IRNO in every pair, lowering the summed RMSE by $11.0$ percent and the summed fRMSE by $16.0$ percent. It also surpasses full finetuning in seven pairs under RMSE, against four for IRNO, and $\text{R}^{2}\text{NO}$ with $M=3$ does so in eight, without updating any parameter of the pretrained operator. The main exception is Combustion, where full finetuning remains best.

\section{Qualitative results}
\label{app:qualitative}

Figures~\ref{fig:qual-cylinder}--\ref{fig:qual-combustion} compare the predictions of DPOT-S on one test window of each system at five forecast frames. The rows show the measured target, full finetuning, IRNO, and $\text{R}^{2}\text{NO}$ with $M=3$, and all rows of a figure share one colour scale. At this scale the adapted methods differ only slightly, in line with the errors of Table~\ref{tab:main}, and the difference is most visible in the background of Foil, where full finetuning and IRNO carry a grid-aligned high-frequency pattern that $\text{R}^{2}\text{NO}$ largely removes.

\begin{figure}[h]
\centering
\includegraphics[width=0.95\linewidth]{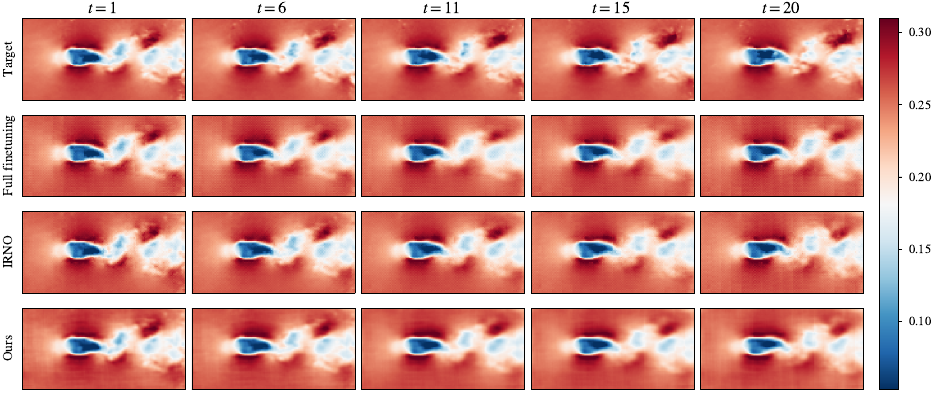}
\caption{Predictions of DPOT-S on one Cylinder test window at five forecast frames. Rows show the measured target, full finetuning, IRNO, and $\text{R}^{2}\text{NO}$; all panels share one color scale.}
\label{fig:qual-cylinder}
\end{figure}

\begin{figure}[h]
\centering
\includegraphics[width=0.95\linewidth]{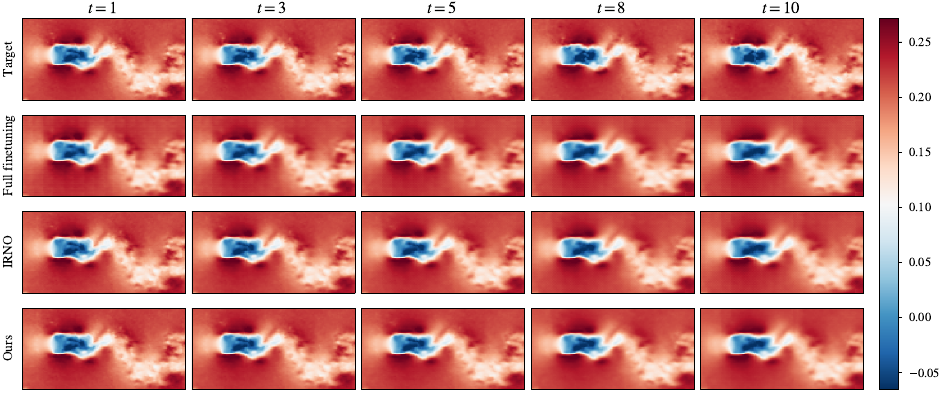}
\caption{Predictions of DPOT-S on one Controlled Cylinder test window at five forecast frames. Rows show the measured target, full finetuning, IRNO, and $\text{R}^{2}\text{NO}$; all panels share one color scale.}
\label{fig:qual-controlled-cylinder}
\end{figure}

\begin{figure}[h]
\centering
\includegraphics[width=0.9\linewidth]{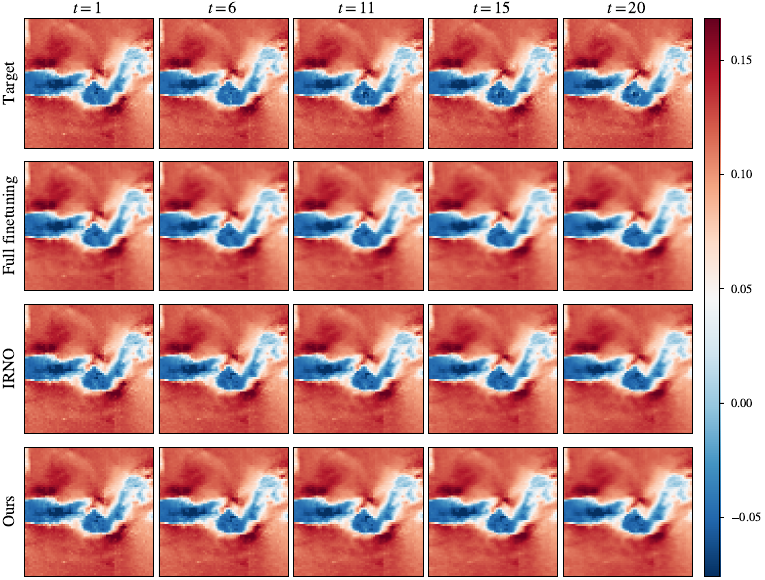}
\caption{Predictions of DPOT-S on one FSI test window at five forecast frames. Rows show the measured target, full finetuning, IRNO, and $\text{R}^{2}\text{NO}$; all panels share one color scale.}
\label{fig:qual-fsi}
\end{figure}

\begin{figure}[h]
\centering
\includegraphics[width=0.95\linewidth]{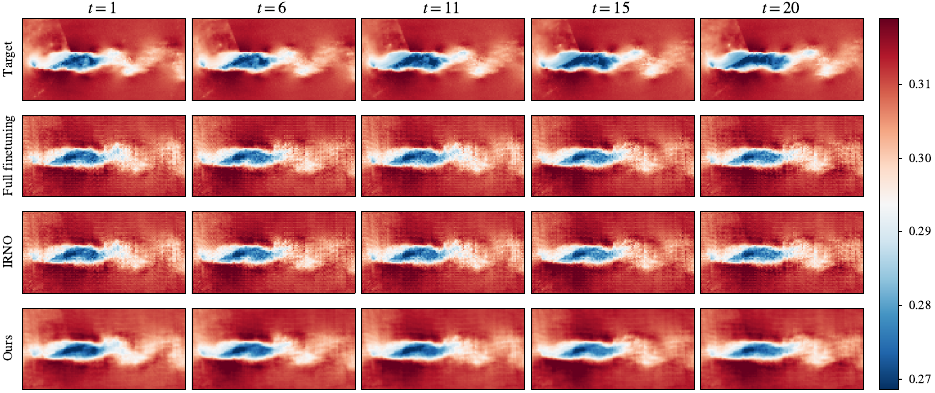}
\caption{Predictions of DPOT-S on one Foil test window at five forecast frames. Rows show the measured target, full finetuning, IRNO, and $\text{R}^{2}\text{NO}$; all panels share one color scale.}
\label{fig:qual-foil}
\end{figure}

\begin{figure}[h]
\centering
\includegraphics[width=0.9\linewidth]{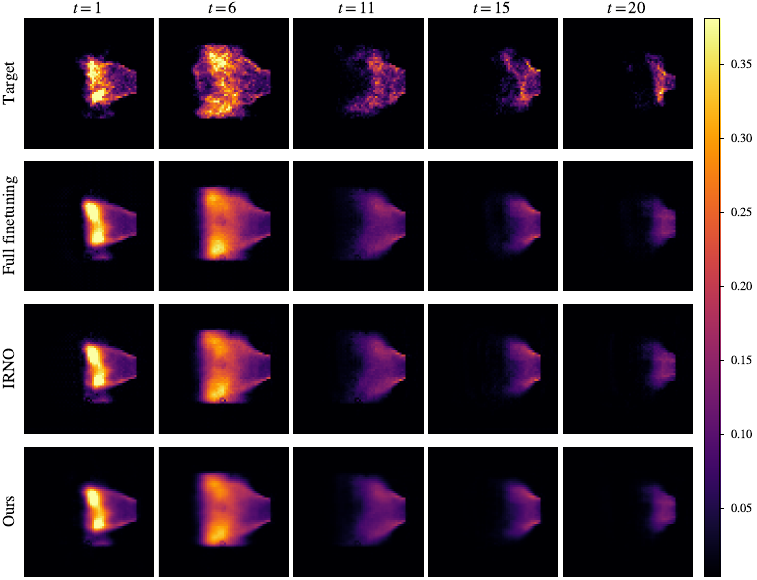}
\caption{Predictions of DPOT-S on one Combustion test window at five forecast frames. Rows show the measured target, full finetuning, IRNO, and $\text{R}^{2}\text{NO}$; all panels share one color scale.}
\label{fig:qual-combustion}
\end{figure}

\end{document}